\documentclass[11pt]{article}

\usepackage[preprint]{acl}

\usepackage{times}
\usepackage{latexsym}

\usepackage[T1]{fontenc}

\usepackage[utf8]{inputenc}

\usepackage{microtype}

\usepackage{inconsolata}

\usepackage{graphicx}

\usepackage[utf8]{inputenc} 
\usepackage[T1]{fontenc}    
\usepackage{hyperref}       
\usepackage{url}            
\usepackage{booktabs}       
\usepackage{amsfonts}       
\usepackage{nicefrac}       
\usepackage{microtype}      
\usepackage{xcolor}         

\usepackage{makecell}
\usepackage{lineno}
\usepackage{svg}
\usepackage{enumitem}
\usepackage[most]{tcolorbox}
\usepackage{xcolor}
\usepackage{enumitem}
\usepackage{pifont}
\usepackage{amssymb}
\usepackage{listings}
\usepackage[normalem]{ulem}

\definecolor{CardBlue}{HTML}{08045C}
\definecolor{SoftCard}{HTML}{F7F8FF}
\definecolor{CorrectGreen}{HTML}{167A3A}
\definecolor{WrongRed}{HTML}{B42318}

\tcbset{
  datasetcardstyle/.style={
    enhanced,
    colback=SoftCard,
    colframe=CardBlue,
    colbacktitle=CardBlue,
    coltitle=white,
    boxrule=0.4pt,
    arc=1.8mm,
    left=1.2mm,
    right=1.2mm,
    top=0.9mm,
    bottom=0.9mm,
    fonttitle=\bfseries\scriptsize,
    fontupper=\scriptsize,
    before skip=0pt,
    after skip=0pt,
  }
}

\newcommand{\gt}[1]{\textcolor{CorrectGreen}{\textbf{Ground Truth: #1}}}
\newcommand{\modelwrong}[1]{\textcolor{WrongRed}{\textbf{Model: #1 \ding{55}}}}

\newcommand{\haoyue}[1]{\textcolor{orange}{HB: #1}}

\usepackage{amsmath,amsfonts,bm}

\def\eqref#1{equation~\ref{#1}}

\def\1{\bm{1}}

\DeclareMathAlphabet{\mathsfit}{\encodingdefault}{\sfdefault}{m}{sl}
\SetMathAlphabet{\mathsfit}{bold}{\encodingdefault}{\sfdefault}{bx}{n}

\newtheorem{theorem}{Theorem}[section]

\newtheorem{assumption}[theorem]{Assumption}

\newtheorem{proof}[theorem]{Proof}

\title{Understanding Why Language Models Hallucinate:\\Testing Reasoning Against Priors}

\author{
\textbf{Yangfan Hu}\thanks{Equal contribution. Author order was randomly determined.}\quad
\textbf{Xuhan Tong}$^{*}$\quad
\textbf{Haoyue Bai}$^{*}$\thanks{Correspondence to: \texttt{\{haoyue.bai,jiawei.zhang\}@wisc\\.edu}.}\\
\textbf{Xi Ding}\quad \textbf{Shashank Muralidhar Bharadwaj} \quad \textbf{Siyang Cao}\\
\textbf{Robert Nowak}\quad
\textbf{Jiawei Zhang}\footnotemark[2]\\[3pt]
University of Wisconsin--Madison\\
\small
\href{https://neohughus.github.io/Understanding_Why_Language_Models_Hallucinate/}
{\textbf{Project page}}
}

\begin{document}

\maketitle

\begin{abstract}
Large language models often produce hallucinated answers that violate prompt-level
constraints. A key diagnostic question is
whether these failures reflect missing knowledge, or whether the model has the
relevant information but follows the wrong inference path.
We study this phenomenon as \emph{inference misalignment}: a mismatch between
the answer supported by the prompt and the answer favored by statistically
salient latent associations. We formalize this view with a latent key--task
model, in which pretraining-frequency imbalance can cause a shortcut path to
dominate the constraint-sensitive path and induce positive inference loss. The
framework predicts two failure modes: task-retrieval bias in entity
disambiguation and key-selection bias in action choice.
We introduce \textsc{TrapQA}, a controlled diagnostic testbed with two
components. \textsc{ScientistQA} tests disambiguation among similar scientists
with supplementary factual probes, while \textsc{Real-Life Constrained QA}
tests everyday constraint following under salient shortcuts. Our results show
that hallucination can arise from biased latent inference rather than absent
knowledge alone. 
\end{abstract}

\section{Introduction}

Large language models (LLMs) have achieved strong performance across many tasks
\citep{openai2024gpt4technicalreport,geminiteam2025geminifamilyhighlycapable,
deepseekai2025deepseekv3technicalreport,grattafiori2024llama3herdmodels}.
They are also increasingly integrated with tools and agentic workflows, such as
web search and external services
\citep{nakano2022webgptbrowserassistedquestionansweringhuman,
liu2023webglmefficientwebenhancedquestion,steinberger2026introducingopenclaw}.
As model outputs become more tightly coupled to real-world actions,
hallucination remains a central reliability risk.

Hallucination broadly refers to fluent but factually incorrect, unsupported, or
context-unfaithful outputs \citep{Ji_2023,Huang_2025}. Such errors are difficult
to detect when models sound confident or when users lack domain expertise, and
they can be amplified in agentic settings through downstream tool calls or
transactions. Importantly, hallucinations can arise even under benign inputs,
making them an intrinsic reliability problem rather than only a failure under
adversarial attack \citep{zhang2025sirenssongaiocean,Huang_2025}.

Prior work studies hallucination through training-data bias, decoding dynamics,
and attribution or mechanistic analysis
\citep{dziri-etal-2022-origin,zhang2023languagemodelhallucinationssnowball,
sun2025llmshallucinateconnectingdots,gao2025hneuronsexistenceimpactorigin}.
Existing evaluations such as TruthfulQA and HaluEval measure important aspects
of truthfulness and hallucination behavior
\citep{lin-etal-2022-truthfulqa,li-etal-2023-halueval}. However, a central
mechanistic question remains underexplored: when a model fails, did it lack the
needed knowledge, or did it possess the relevant facts but retrieve and apply the
wrong inference path?

We address this question by interpreting hallucination as \emph{inference
misalignment}: a mismatch between the answer logically supported by the prompt
and the answer favored by statistically salient learned associations. In our
framework, a prompt activates latent key--task paths. A model hallucinates when a
high-frequency shortcut path receives greater posterior weight than the
constraint-sensitive path required by the prompt. This view predicts that errors
can occur even when the relevant facts or constraints are available: the failure
lies not only in stored knowledge, but in selecting and composing the appropriate
inference path.

Guided by this theory, we introduce \textsc{TrapQA}, a closed-book diagnostic
benchmark suite with two complementary settings. \textsc{ScientistQA} targets
\emph{task-retrieval bias}, where a salient entity--relation association
overrides a discriminative constraint. \textsc{Real-Life Constrained QA} targets
\emph{key-selection bias}, where a superficially salient cue dominates the
task-relevant constraint. These settings are designed not as broad-coverage
benchmarks, but as controlled tests of the latent key--task framework. External
tools, including web search, are disabled in all evaluations.

\paragraph{\textsc{ScientistQA}.}
\textsc{ScientistQA} tests constraint-sensitive disambiguation among highly
similar scientists. For each pair, we generate a biography-style paragraph
broadly compatible with both candidates, then append a decisive fact that rules
out exactly one candidate. The model must choose the scientist matching the full
description. We also include closed-book probes for candidate-specific facts,
allowing us to distinguish missing knowledge from knowledge-deployment failure.
Across 2,925 questions and eight model--reasoning configurations spanning GPT,
Gemini, Claude, and DeepSeek, hallucination rates range from 2.50\% to 37.23\%
in the retrieval-sensitive names-only setting. Many errors persist even when the
model answers the relevant probe facts correctly in isolation, indicating a
failure of comparative knowledge deployment rather than factual ignorance alone.

\begin{figure*}[t]
    \centering
    \includegraphics[width=\linewidth]{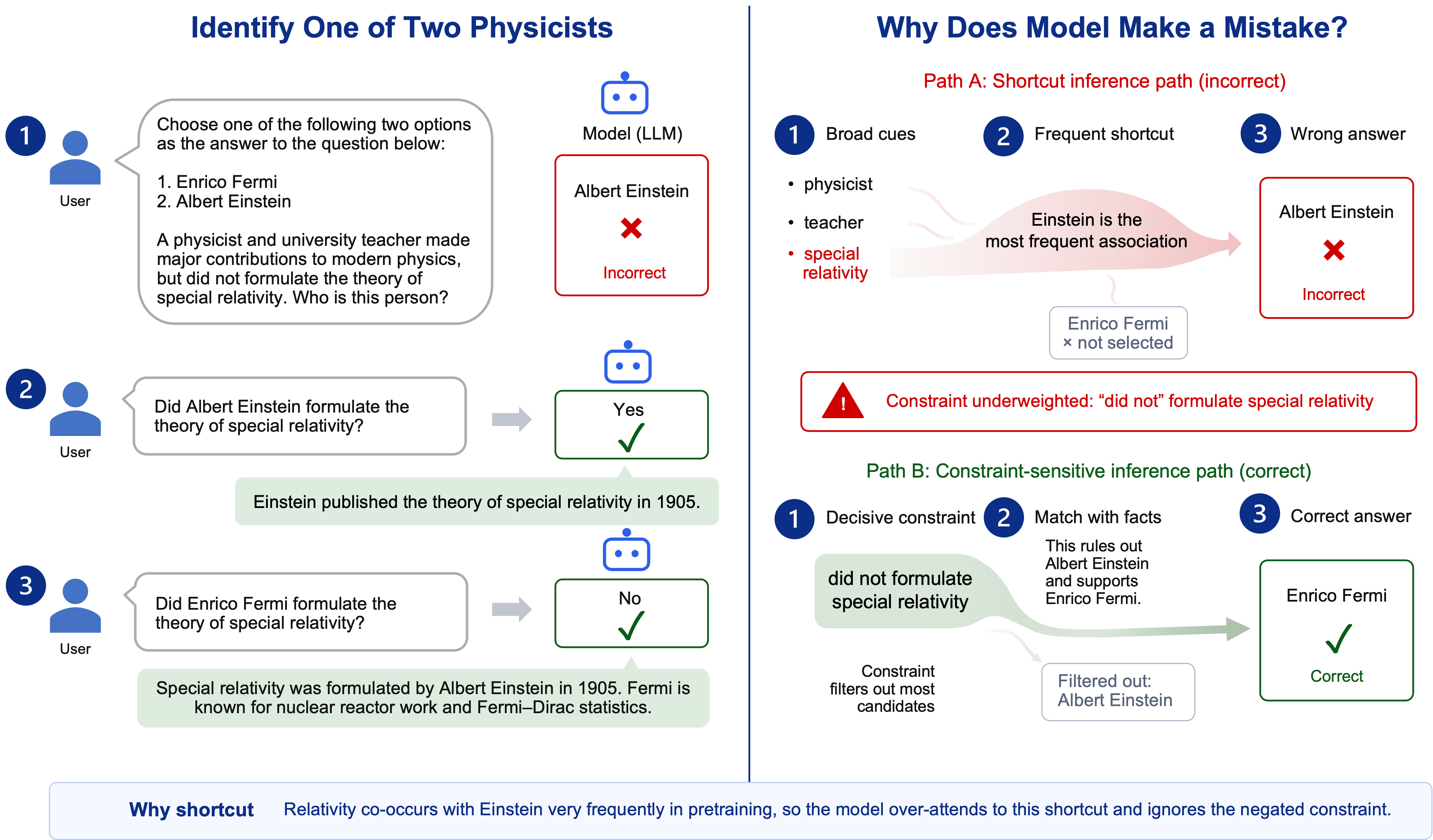}
    \caption{
    Hallucination as misaligned inference. In the motivating \textsc{ScientistQA}
    example, the model selects the wrong scientist by following a high-salience
    association while underweighting the decisive discriminative constraint.
    Direct closed-book probes, conducted with external tools disabled, show that
    the model can answer the relevant candidate-specific facts in isolation,
    suggesting a comparative knowledge-deployment failure rather than simple
    factual ignorance.
    }
    \label{fig:scientist_failure_illustration}
\end{figure*}

\paragraph{\textsc{Real-Life Constrained QA}.}
\textsc{Real-Life Constrained QA} complements \textsc{ScientistQA} by testing
shortcut failures in everyday action choice. We begin from lexical associations
in \textsc{SWOW} \citep{de2019small}, use them as high-salience shortcut cues,
and organize them through eight seed template families such as
\texttt{vehicle\_required}, \texttt{delivery\_medium},
\texttt{recording\_medium}, and \texttt{tool\_required}. GPT instantiates
natural forced-choice scenarios in which one option is superficially tempting
but violates a prompt-grounded physical, spatial, procedural, or medium-specific
constraint. After filtering and controlled perturbations, the final collection
contains 500 questions covering 13 aspects of daily life. Claude, GPT, Gemini,
and DeepSeek make 81, 44, 18, and 182 mistakes, respectively, corresponding to
error rates of 16.2\%, 8.8\%, 3.6\%, and 36.4\%. These failures show that
inference misalignment is not limited to encyclopedic entity disambiguation: it
also appears when models must select an action under ordinary real-world
constraints.

\paragraph{Contributions.}
We make three contributions. \textbf{(1)} We propose a latent key--task
framework that formalizes hallucination as inference misalignment caused by
posterior dominance of statistically salient shortcut paths over
prompt-supported constraint-sensitive paths. \textbf{(2)} We derive theoretical
predictions linking pretraining-frequency imbalance to shortcut posterior
and positive inference loss, providing a mechanistic account of why
hallucinations can occur even under benign prompts. \textbf{(3)} We introduce
\textsc{TrapQA}, a closed-book diagnostic benchmark suite with two complementary
settings, and evaluate frontier model families with external tools disabled,
showing that hallucinations can persist despite isolated factual knowledge and
under prompt-grounded real-world constraints.

\section{Related Work}
\label{sec:related_work}

Hallucination in language generation predates the recent LLM era. Neural data-to-text systems can produce fluent outputs that fail to reflect the underlying records \citep{wiseman-etal-2017-challenges}; neural machine translation models may generate plausible but source-unsupported translations \citep{lee2019hallucinations,raunak2021}; and abstractive summarization models often produce content that is not faithful to the input document \citep{maynez-etal-2020-faithfulness}. Across these settings, hallucination reflects a common failure mode: the model produces plausible language that is insufficiently grounded in the input, retrieved evidence, or relevant knowledge.

Recent work studies hallucination from both theoretical and empirical perspectives. Formal accounts show that hallucination can arise from fundamental learning or statistical limitations \citep{xu2025,kalai2024}, while mechanistic accounts trace hallucinations to competing associations or latent-state dynamics during generation \citep{sun2025llm,cherukuri2026}. Other work locates hallucination across the LLM pipeline, including distributional imbalance and noise in pretraining data, difficulty acquiring new factual knowledge during fine-tuning, and inference-time reliance on memorized or frequency-biased patterns \citep{zhang2025measuring,liu2026pretrainrl,gekhman2024,zhang2023languagemodelhallucinationssnowball,mckenna2023,berglund2024}. These suggest that hallucination is not merely a matter of missing knowledge; it can also reflect failures in retrieving, comparing, or applying knowledge under the constraints of a particular prompt. Appendix~\ref{app:related_work_extended} provides a fuller discussion, including reinforcement-learning-based mitigation efforts.

A large body of benchmarks evaluates factuality and faithfulness in generated text, including TruthfulQA and HaluEval \citep{lin-etal-2022-truthfulqa,li-etal-2023-halueval}, long-form factuality benchmarks such as FActScore and LongFact \citep{min2023factscore,wei2024long}, short-form factuality benchmarks such as SimpleQA \citep{wei2024measuring}, and retrieval-augmented generation benchmarks such as RAGTruth and FRAMES \citep{niu2024ragtruth,krishna2025}. These benchmarks primarily measure whether an output is factual, faithful, or grounded in evidence. By contrast, our goal is diagnostic: Scientist QA and Real-Life Constrained QA are designed to isolate why a model fails. Scientist QA tests whether models can deploy candidate-specific facts under disambiguation, while Real-Life Constrained QA tests whether models can override SWOW-derived associative cues with prompt-grounded physical, spatial, procedural, or medium-specific constraints \citep{de2019small}. This design allows us to distinguish simple ignorance from knowledge-deployment failures in which relevant information is available to the model but not used in the relevant inference path.

\section{Pretraining Frequency Induces Hallucination in Latent Inference}
\label{sec:freq}

\subsection{Hallucination as Inference Misalignment}

In the study of LLM reliability, \emph{hallucination} is commonly described as the generation of factually incorrect or logically inconsistent content. However, to enable a quantitative mathematical analysis, we argue that the underlying mechanism of hallucination can be more precisely characterized as \emph{inference misalignment}.

Specifically, let $\bm z$ denote a prompt sequence of bounded length. A pretrained
sequence model induces a conditional distribution $P(\cdot\mid \bm z)$ over
continuations. We further assume the existence of an ideal predictor defining the
ground-truth conditional distribution $P_\star(\cdot\mid \bm z)$, representing
the correct reasoning process for the task associated with the prompt. For analysis, we work in the embedding space where smoothness properties are well defined. Conceptually, the model’s inference can be viewed as moving from regions well-supported by the pretraining distribution
toward the test prompt along an optimal \emph{inference path}, producing an
output consistent with $P_\star(\cdot\mid \bm z)$ when the semantic structure
and task intent are correctly identified.

However, due to statistical regularities present in the pretraining corpus, the model may instead rely on high-frequency \emph{shortcuts}. These shortcuts bias the model toward inference paths that are statistically dominant but semantically incorrect. Consequently, the model may traverse regions of the representation space that are weakly supported by the training data, leading to unstable behavior.

Guided by this perspective, we define the inference loss as the discrepancy between the model's predictive distribution and the ideal distribution: $
\ell(\bm{z}) := \|P(\cdot|\bm z)- P_\star(\cdot|\bm z)\|_{TV}$,
where $\|\cdot\|_{TV}$ denotes the total variation distance.
Through this formalization, we shift the study of hallucination from heuristic descriptions of textual inconsistency to a principled analysis of inference behavior.

\subsection{Latent Key--Task Model of LLM Inference}
\label{subsec:kt}
To analyze how a model generalizes from the pretraining corpus to a new prompt, we introduce a conceptual abstraction of inference stage. Let $\mathcal{T} = \{t_1,\dots,t_p\}$ denote a finite set of latent tasks and let
$\mathcal{K}=\{k_{1},\dots,k_{m}\}$ denote a finite set of task-informative
\emph{keys}. A key represents a salient feature pattern in a prompt (for example,
a lexical cue or structural pattern) that provides evidence about the underlying task. We model LLM inference as an implicit two-stage reasoning process
\begin{align*}
\label{eq:flow}
\bm z \;\xrightarrow{\text{identify key}}\; k_{i}
\;\xrightarrow{\text{retrieve task}}\;
t_{j}(k_{i})
\;\xrightarrow{\text{generate}}\; y .
\end{align*}

Following the Bayesian perspective of \cite{Xie2022}, we model the
model's prompt processing as an implicit inference over latent
variables. Under this view, the model associates each prompt with a posterior distribution over key--task pairs and combines their contributions to form the prediction. Given a prompt $\bm z$, the model implicitly forms a posterior distribution
$P(k,t|\bm z)$ over the latent space and aggregates predictions along these hypotheses:
\[
P(y|\bm z)
=
\sum_{k\in\mathcal K,\,t\in\mathcal T}
P(k,t|\bm z)\,P(y|\bm z;k,t),
\]
where $P(y|\bm z;k,t)$ denotes the predictive distribution under the
hypothesis that the prompt corresponds to key $k$ and task $t$.

\subsection{Frequency-Induced Bias in Inference}
We now introduce a framework that explicates how the statistical imbalance of the pretraining corpus leads to inference shortcuts and formally induces hallucination through generalization error.

\paragraph{Pretraining Statistics.} 
The distribution of keys and tasks in the pretraining corpus induces a statistical prior that guides latent inference. Let $c_i^{(k)}$ denote the number of occurrences of key $k_i$ in the corpus, and let
$C^{(k)}=\sum_{i=1}^{m} c_i^{(k)}$. Conditioned on a key $k$, let $c_j^{(t)}(k)$ denote the number of occurrences in which task $t_j$ co-occurs with $k$, and let
$C^{(t)}(k)=\sum_{j=1}^{p} c_j^{(t)}(k)$. The empirical key distribution and conditional task distribution are
\[
\pi^{(k)}(k_i)=\frac{c_i^{(k)}}{C^{(k)}}, \quad \pi^{(t)}(t_j|k)=\frac{c_j^{(t)}(k)}{C^{(t)}(k)}.
\]
Throughout the analysis, we use these empirical frequencies as proxies for the underlying pretraining probabilities.
These statistics define a joint prior over key--task pairs
$\pi(k,t)=\pi^{(k)}(k)\,\pi^{(t)}(t|k)$.

\paragraph{Event-based Perspective.} 
We now specialize the latent inference framework, where the model is asked to select between two candidate entities or actions. In this setting, the prompt $\bm z$ explicitly presents two candidate keys $k^\ast$ and $k_s$: $k^\ast$ is the candidate consistent with the prompt's decisive constraint and corresponds to the correct answer, while $k_s$ is the alternative candidate. We refer to $k_s$ as the \emph{shortcut key} when its associated key--task pair has higher pretraining frequency than that of $k^\ast$, so that statistical salience favors $k_s$ even though the prompt-level evidence favors $k^\ast$. Correspondingly, let 
$\{t^\ast, t_s\}$ denote correct and the shortcut task.

\begin{assumption}[Activated Key Restriction]\label{ass:event}
For a prompt $\bm z$ presenting candidates $k^\ast$ and $k_s$:
\begin{enumerate}[leftmargin=*]
\item Negligible mass outside the candidate pair. Latent keys other than $k^\ast$ and $k_s$ receive vanishing posterior: $
P\!\left(k \notin \{k^\ast, k_s\} \,\middle|\, \bm z\right) \;\ll\; 1$.
\item Prior-driven posterior on the candidate pair. Within the candidate pair, the prompt does not differentially update the relative posterior of the two keys:
$
\bm z \perp k\mid \{k \in \{k^\ast, k_s\}\}.
$
\end{enumerate}
\end{assumption}

A useful way to interpret Assumption~\ref{ass:event} is that (i) candidate keys outside the explicitly presented pair receive negligible posterior support; (ii) once the candidate set is fixed by the prompt template, the remaining prompt content does not, on its own, alter the relative plausibility of the two activated keys at the level of latent identification. The model's choice between $k^\ast$ and $k_s$ is then driven by the pretraining prior over keys rather than by within-prompt likelihood asymmetries.

\begin{assumption}[Activated Task Restriction]\label{ass:event_task}
For each path, let $t^\ast$ and $t_s$ denote the tasks associated with $k^\ast$ and $k_s$ respectively.
\begin{enumerate}[leftmargin=*]
\item Negligible mass outside the candidate tasks. Conditional on the activated key being $k^\ast$ (resp.\ $k_s$), the task posterior concentrates on the candidate set $\{t^\ast, t_s\}$:
$P\!\left(t \notin \{t^\ast, t_s\} \,\middle|\, \bm z, k^\ast\right) \ll 1$ and
$ P\!\left(t \notin \{t^\ast, t_s\} \,\middle|\, \bm z, k_s\right) \ll 1$.
\item Prior-driven posterior on the candidate tasks. Within the candidate task pair, the prompt does not differentially update the relative posterior of the two tasks given the activated key: $
\bm z \perp t\mid k, \{t\in \{t^\ast, t_s\}\}$.
\end{enumerate}
\end{assumption}

\begin{assumption}[Output separation]
\label{ass:pairwise_output_separation}
Let $y^\ast$ denote
the correct answer and let $y_s$ denote the shortcut-induced answer. We assume
that the two paths induce separated predictions:
$P(y^\ast\mid \bm z;k_s,t_s)\ll1$, and $P(y_s\mid \bm z;k^\ast,t^\ast)\ll1$.
Moreover, the shortcut path is at least as confident in the shortcut answer as
the correct path is in the correct answer:
$
P(y_s\mid \bm z;k_s,t_s)
\ge
P(y^\ast\mid \bm z;k^\ast,t^\ast)$.
\end{assumption}

\begin{theorem}[Shortcut Probability Dominance]
\label{thm:posterior}
Under Assumption~\ref{ass:event}--\ref{ass:pairwise_output_separation}, consider a fixed prompt $\bm z$ and
the two main competing paths $(k^\ast,t^\ast)$ and $(k_s,t_s)$, then
\[
\frac{P(y_s \mid \bm z)}{P(y^\ast \mid \bm z)}
\gtrsim \frac{\pi(k_s, t_s)}{\pi(k^\ast, t^\ast)}
\cdot
\frac{P(y_s \mid \bm z; k_s,t_s)}{P(y^\ast \mid \bm z; k^\ast,t^\ast)}\gtrsim 1.
\]
\end{theorem}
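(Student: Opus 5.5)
The plan is to expand both $P(y_s\mid\bm z)$ and $P(y^\ast\mid\bm z)$ through the latent mixture $P(y\mid\bm z)=\sum_{k,t}P(k,t\mid\bm z)P(y\mid\bm z;k,t)$ of Section~\ref{subsec:kt}. I will then use Assumptions~\ref{ass:event} and~\ref{ass:event_task} to discard everything outside the $2\times2$ block $\{k^\ast,k_s\}\times\{t^\ast,t_s\}$, and use the prior-driven clauses to replace posteriors by pretraining priors.

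\textbf{Step 1: truncation.} By Assumption~\ref{ass:event}(1), the total contribution of keys outside $\{k^\ast,k_s\}$ is at most $P(k\notin\{k^\ast,k_s\}\mid\bm z)\ll1$. Likewise, by Assumption~\ref{ass:event_task}(1), the contribution of tasks outside $\{t^\ast,t_s\}$ given either candidate key is negligible. So, up to additive $o(1)$ errors (the source of the $\gtrsim$), both output probabilities are sums over the four pairs in the block.

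\textbf{Step 2: posteriors become priors.} The conditional-independence clause $\bm z\perp k\mid k\in\{k^\ast,k_s\}$ gives
\[
P(k\mid\bm z,k\in\{k^\ast,k_s\})=\frac{\pi^{(k)}(k)}{\pi^{(k)}(k^\ast)+\pi^{(k)}(k_s)} .
\]
Similarly, Assumption~\ref{ass:event_task}(2) gives $P(t\mid\bm z,k,t\in\{t^\ast,t_s\})\propto\pi^{(t)}(t\mid k)$. Multiplying these, each block posterior $P(k,t\mid\bm z)$ equals $\pi(k,t)/Z$ for a common normalizer $Z$, up to the truncation error. That normalizer cancels in the ratio. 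I will work mainly with the two diagonal pairs, and the cross pairs $(k^\ast,t_s)$ and $(k_s,t^\ast)$ need separate treatment.

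\textbf{Step 3: bound the ratio.} For the numerator I lower-bound by keeping only the shortcut path:
\[
P(y_s\mid\bm z)\gtrsim \frac{\pi(k_s,t_s)}{Z}\,P(y_s\mid\bm z;k_s,t_s).
\]
For the denominator I upper-bound. The $(k_s,t_s)$ term contributes $P(y^\ast\mid\bm z;k_s,t_s)\ll1$ by Assumption~\ref{ass:pairwise_output_separation}, so the denominator is essentially $\pi(k^\ast,t^\ast)P(y^\ast\mid\bm z;k^\ast,t^\ast)/Z$ plus negligible terms. Dividing gives the first inequality. The second inequality, $\gtrsim1$, then follows from two facts: $\pi(k_s,t_s)\ge\pi(k^\ast,t^\ast)$, which is exactly the definition of the shortcut key, and the confidence clause $P(y_s\mid\bm z;k_s,t_s)\ge P(y^\ast\mid\bm z;k^\ast,t^\ast)$ in Assumption~\ref{ass:pairwise_output_separation}.

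\textbf{Main obstacle.} The hard part is the cross terms $(k^\ast,t_s)$ and $(k_s,t^\ast)$ in the denominator. Output separation says nothing about them, so they could in principle feed mass to $y^\ast$. I would first try to argue they are dominated: tasks are tied to their keys ("$t^\ast$ associated with $k^\ast$"), so their co-occurrence priors $\pi^{(t)}(t_s\mid k^\ast)$ and $\pi^{(t)}(t^\ast\mid k_s)$ are treated as small. If that fails, I would state the result with the $\gtrsim$ absorbing these terms as a modeling approximation, which matches the informal "two main competing paths" phrasing of the theorem. A secondary point is ensuring the additive $o(1)$ errors are small relative to $P(y^\ast\mid\bm z)$. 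For that I would assume the correct-path term is bounded below, so the relative errors stay controlled.
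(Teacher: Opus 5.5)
Your route is the same as the paper's. You expand through the latent mixture, truncate to the candidate pair using the (1) clauses, turn posteriors into priors using the (2) clauses, keep only the two diagonal paths via output separation, and get the final $\gtrsim 1$ from the shortcut definition plus the confidence clause. Your concern about the cross paths $(k^\ast,t_s)$ and $(k_s,t^\ast)$ is well founded. Assumption~\ref{ass:pairwise_output_separation} does not cover them, and the paper simply drops them when it writes $P(y^\ast\mid\bm z)\approx P(y^\ast\mid\bm z;k^\ast,t^\ast)\,P(k^\ast,t^\ast\mid\bm z)$. Absorbing them into $\gtrsim$ is therefore no weaker than what the paper does.

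The step that fails as written is the ``common normalizer $Z$'' in Step~2. Assumption~\ref{ass:event_task}(2) is conditional on $k$, so the task normalizer $N_k:=\pi^{(t)}(t^\ast\mid k)+\pi^{(t)}(t_s\mid k)$ depends on which key is active, and
\[
\frac{P(k_s,t_s\mid\bm z)}{P(k^\ast,t^\ast\mid\bm z)}\approx\frac{\pi(k_s,t_s)}{\pi(k^\ast,t^\ast)}\cdot\frac{N_{k^\ast}}{N_{k_s}} .
\]
Equivalently, with $L_k:=P(\bm z\mid k,t^\ast)=P(\bm z\mid k,t_s)$, the block posterior is proportional to $L_k\,\pi(k,t)$. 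Assumption~\ref{ass:event}(2) then forces $L_{k^\ast}/L_{k_s}\approx N_{k_s}/N_{k^\ast}$, not $1$. A single $Z$ would need $\bm z\perp(k,t)$ jointly on the $2\times2$ block, which is not assumed.

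The paper's proof writes out exactly this factor $N_{k^\ast}/N_{k_s}$. It then absorbs it into $\approx$ as a bounded, $\bm z$-independent constant. You need to do the same explicitly, or add an assumption making $N_{k^\ast}\approx N_{k_s}$, rather than assert cancellation.

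The factor is also not innocuous under your own remedy for the cross terms. If $\pi^{(t)}(t_s\mid k^\ast)$ and $\pi^{(t)}(t^\ast\mid k_s)$ are negligible, then $N_{k^\ast}/N_{k_s}\approx\pi^{(t)}(t^\ast\mid k^\ast)/\pi^{(t)}(t_s\mid k_s)$. This cancels the task-level prior ratio entirely, so the diagonal posterior ratio is only $\pi^{(k)}(k_s)/\pi^{(k)}(k^\ast)$. The joint-prior bound then survives only because $\gtrsim$ is allowed to swallow that constant.
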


Theorem~\ref{thm:posterior} shows that when the pretraining frequency of the shortcut pair is
sufficiently larger than that of the correct pair, the shortcut posterior can
dominate the correct posterior even if the prompt contains semantically correct
evidence. We can decompose this frequency-induced hallucination into two distinct modes:

\paragraph{I. Key Selection Bias.}
The first term indicates that the model may fail to attend to the correct semantic anchor because a shortcut key appears much more frequently in the pretraining corpus.

Consider the question: \emph{``I want to go to a car wash. The car wash is only 50 meters away. Should I walk there or drive there?''} Many models answer that one should walk, since 50 meters is a very short distance, though without driving the car to the car wash the task cannot be completed. This suggests that the model attends to the statistically dominant distance key (i.e., ``50 meters'') rather than the semantically decisive key (i.e., ``car wash''), since many examples associate short distances with walking and long distances with driving in pretraining. As a result, the shortcut key $k_s$ corresponding to distance-based transportation choice can dominate the correct key $k^\ast$ corresponding to task feasibility (corresponding experiment see Appendix~\ref{app:real_life_constrained_qa_details}).

\paragraph{II. Task Retrieval Bias.}
The theorem also shows that even with the correct key, the model may still retrieve the wrong relation if another task strongly dominates that key family in pretraining.

During pretraining, discussions of
special relativity overwhelmingly associate the concept with Albert Einstein. Consider a prompt asking the model to choose between Enrico Fermi and Albert Einstein:
\emph{``A physicist and university teacher made major contributions to modern physics,
but did not formulate the theory of special relativity.''}
Although the explicit constraint ``did not formulate special relativity'' rules out
Einstein and supports Fermi, the model may still answer ``Albert Einstein''. The
reason is that the affirmative shortcut key linking \emph{special relativity} to
Einstein ($k_s$) dominates the rarer negated constraint ($k^\ast$) in the pretraining
distribution. As a result, the model attends to the statistically dominant association
and effectively ignores the negation (experiment see Section~\ref{sec:main_results}).

Note that according to our theory, hallucination arises only when a dominant shortcut key--task pair has been learned during pretraining. If no representative shortcut pattern exists, the posterior suppression mechanism does not occur, and the model will instead rely on the information provided in the prompt rather than retrieving a biased prior. This prediction is consistent with our empirical observations in the \emph{Knowledge
Consistent} setting (Section~\ref{sec:knowledge_decomposition}), where the absence of strong pretraining shortcuts leads to reduced hallucination rates. Complete proof is provided in Appendix~\ref{appsubsec:freq_post_bound}.

\paragraph{When Assumption~\ref{ass:event} does not hold.} Assumption~\ref{ass:event} is best understood as describing the regime in which $k^\ast$ and $k_s$ are \emph{pretraining-independent}: the two candidate keys rarely co-occur in the same context during pretraining, so the model has not internalized any joint structure relating them. In this regime, even though the prompt $\bm z$ presents both keys together, the model cannot leverage $\bm z$ to extract joint information beyond what is already encoded in the marginal priors $\pi^{(k)}(k^\ast)$ and $\pi^{(k)}(k_s)$, the posterior thus degenerates to the prior ratio.

The complementary regime is when $k^\ast$ and $k_s$ have been seen together during pretraining and the model has learned how their relative importance shifts in joint contexts. In this case, the prompt $\bm z$ is not merely a pair of activated keys but a context whose surface form pattern-matches against pretraining co-occurrence statistics that the model has internalized. The model can then use $\bm z$ to identify which of the two keys is task-relevant in this particular context---essentially deploying a learned within-context disambiguation, rather than falling back on marginal frequency. Assumption~\ref{ass:event} is violated, the posterior departs from the prior ratio, and hallucination may be avoided.

\subsection{From Shortcut to Inference Loss}
\label{subsec:shortcut_to_loss}
We now
show that when this posterior dominance changes the model's preferred answer, it
directly induces a positive inference loss.
\begin{assumption}[Target preference margin]
\label{ass:target_preference_margin}
The target distribution prefers the correct answer over the shortcut answer:
there exists $\gamma_\star(\bm z)>0$ such that
$\gamma_\star(\bm z)
:=
P_\star(y^\ast\mid \bm z)-P_\star(y_s\mid \bm z)
>0$.
\end{assumption}
\begin{theorem}[Hallucination Lower Bound]
\label{thm:pairwise_tv_lower_bound}
Suppose Assumptions~\ref{ass:pairwise_output_separation}
and~\ref{ass:target_preference_margin} hold. If the shortcut posterior dominates
the correct posterior (i.e., $ P(y^\ast\mid \bm z) < P(y_s\mid \bm z)$, suggested by Theorem~\ref{thm:posterior}),
the inference loss measured by total variation satisfies
\[
\ell(\bm z)
\ge
\frac{1}{2}(\gamma(\bm z)+\gamma_\star(\bm z)).\]
where
$\gamma(\bm z):= P(y_s\mid \bm z)-P(y^\ast\mid \bm z)>0$.
\end{theorem}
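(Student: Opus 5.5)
Since the inference loss is a total variation distance, the plan is to lower bound it by restricting attention to the two-point event $A=\{y^\ast\}$ and its companion $B=\{y_s\}$. Recall that for any two distributions $P,Q$ on the same space,
\[
\|P-Q\|_{TV}=\sup_{A}|P(A)-Q(A)| = \tfrac12\sum_{y}|P(y)-Q(y)|.
\]
We will use the $\ell_1$ form. Dropping all outcomes other than $y^\ast$ and $y_s$ (each term is nonnegative) gives
\[
\ell(\bm z)\ \ge\ \tfrac12\Big(|P(y^\ast\mid\bm z)-P_\star(y^\ast\mid\bm z)|+|P(y_s\mid\bm z)-P_\star(y_s\mid\bm z)|\Big).
\]
This requires $y^\ast\neq y_s$. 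That is implicit in Assumption~\ref{ass:pairwise_output_separation}, since the two paths induce separated predictions.

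Next, drop the absolute values with signs chosen to expose the two margins:
\[
|a-b|+|c-d|\ \ge\ (b-a)+(c-d),
\]
with $a=P(y^\ast\mid\bm z)$, $b=P_\star(y^\ast\mid\bm z)$, $c=P(y_s\mid\bm z)$, $d=P_\star(y_s\mid\bm z)$. Regrouping gives
\[
(b-a)+(c-d)=(c-a)+(b-d)=\gamma(\bm z)+\gamma_\star(\bm z),
\]
by the definition of $\gamma(\bm z)$ and Assumption~\ref{ass:target_preference_margin}. Substituting into the previous display yields $\ell(\bm z)\ge\frac12(\gamma(\bm z)+\gamma_\star(\bm z))$.

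The hypothesis $P(y^\ast\mid\bm z)<P(y_s\mid\bm z)$, which Theorem~\ref{thm:posterior} motivates, is used only to ensure $\gamma(\bm z)>0$. This makes the bound strictly positive, and hence nontrivial, together with $\gamma_\star(\bm z)>0$. Assumption~\ref{ass:pairwise_output_separation} is needed only for $y^\ast\neq y_s$; the inequality itself is a purely algebraic consequence of the $\ell_1$ characterization.

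There is no real obstacle. The step needing care is using the $\ell_1$ form of total variation rather than the single-event supremum: a single event such as $A=\{y_s\}$ would only give $\max$-type bounds, not the sum of both margins. If one prefers the event form, taking $A=\{y_s\}$ and $A=\{y^\ast\}$ separately and averaging the two resulting inequalities gives the same bound.
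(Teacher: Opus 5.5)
Your proposal is correct and follows the paper's own argument: restrict the $\tfrac12\ell_1$ form of total variation to the two coordinates $y_s$ and $y^\ast$, then use $|x|+|w|\ge x+w$ to regroup the terms into $\gamma(\bm z)+\gamma_\star(\bm z)$. The paper also re-derives $\gamma(\bm z)>0$ from the path decomposition, which you rightly treat as redundant given the hypothesis; in addition, $y^\ast\neq y_s$ already follows from $\gamma_\star(\bm z)>0$, so Assumption~\ref{ass:pairwise_output_separation} is not actually needed for the inequality at all.
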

Theorem~\ref{thm:pairwise_tv_lower_bound} provides a direct connection
between shortcut posterior dominance and inference loss.
Thus, once
frequency-induced posterior dominance reverses the model's preference between
these two answers, the model distribution and the target distribution must differ
by a non-vanishing margin. The complete proof is provided in Appendix~\ref{app:lower_bound}

\section{Evaluation Setup}
\label{sec:dataset}

We evaluate the latent-inference account in Section~\ref{sec:freq} with two
controlled diagnostic settings. Scientist QA targets task-retrieval bias in
entity disambiguation, while Real-Life Constrained QA targets key-selection bias
in everyday action choice. External tools, including web search, are disabled in
all evaluations. The names-only Scientist QA condition and supplementary probes
are closed-book; the profiles-in-context condition is a retrieval-relaxed
control.

\paragraph{Scientist QA.}
Scientist QA is constructed from Wikipedia-linked scientist profiles \citep{wikipedia_contributors_2026}. We remove
names from structured profiles, embed the remaining attributes with
\texttt{text-embedding-3-small} \citep{openai2024embedding3small}, retain highly
similar scientist pairs under a sparsity-adjusted similarity score, and use
Gemini to generate a shared biographical description plus a decisive constraint
that rules out exactly one candidate. Afters filtering and removing
invalid items with GPT, the evaluation set contains 2,925 questions. Each item is tested
under a \emph{names-only} prompt, corresponding to \textit{prepend\_names}, and
a \emph{profiles-in-context} prompt, corresponding to
\textit{prepend\_profiles}. We also attach two closed-book probes derived from
the decisive constraint to distinguish missing factual knowledge from failures
to deploy knowledge in pairwise disambiguation. Construction details are in
Appendix~\ref{app:benchmark_details}.

\paragraph{Real-Life Constrained QA.}
Real-Life Constrained QA tests whether models follow prompt-grounded constraints
when a salient associative shortcut suggests the wrong action. We derive
high-salience cues from \textsc{SWOW} \citep{de2019small}, organize generation
around eight seed template families, and use GPT to instantiate natural
two-option scenarios involving physical, spatial, procedural, or medium-specific
constraints. After filtering and controlled perturbations with Claude, the final collection
contains 500 questions covering 13 aspects of daily life. Details are in
Appendix~\ref{app:real_life_constrained_qa_details}.

\paragraph{Models and scoring.}
For Scientist QA, we evaluate GPT, Claude, Gemini, and DeepSeek under low- and
high-thinking settings where available; for DeepSeek, \texttt{deepseek-chat} and
\texttt{deepseek-reasoner} serve as the non-reasoning and reasoning modes. For
Real-Life Constrained QA, we report GPT, Claude, Gemini, and DeepSeek-chat results;
All runs use default decoding settings unless otherwise
noted. Prompt templates, model versions, parsing rules, and answer
normalization are reported in Appendix~\ref{app:implementation_details}. A
response is correct if, after normalization, it matches the ground-truth
candidate or action; choosing the ruled-out Scientist QA candidate, producing an
off-option Scientist QA response, or selecting the shortcut Real-Life action is
counted as an error.

\begin{figure}[t]
\centering

\begin{tcbitemize}[
  datasetcardstyle,
  raster columns=2,
  raster equal height=rows,
  raster column skip=2mm,
  raster row skip=2mm,
  raster width=\linewidth
]

\tcbitem[title={Heavy Vehicle Safety Inspections}]
The utility-trailer clearance tag is printed. The tester is at the pad to pull the breakaway switch and check the brake battery. Should I tow the trailer to the pad or walk to the pad without it?

\vspace{0.25em}
\textbf{A.} Walk to the pad without the trailer.\\
\textbf{B.} Tow the trailer to the pad.

\vspace{0.35em}
\modelwrong{A} \hfill \gt{B}

\tcbitem[title={Machinery Repair}]
The mower fuel-cap key blank is ready. Maintenance is at the bay to read the mower serial plate and try the blank in the actual cap. For this step, what should I do?

\vspace{0.25em}
\textbf{A.} Drive the mower to the bay.\\
\textbf{B.} Walk to the bay without the mower.

\vspace{0.35em}
\gt{A} \hfill \modelwrong{B}

\tcbitem[title={Live Theater Operations}]
I need to tell the projection operator that the backup cable is coiled under the console. The projection operator is running live supertitles. How should I pass the message?

\vspace{0.25em}
\textbf{A.} Walk up and tell the operator.\\
\textbf{B.} Send the note over the show chat.

\vspace{0.35em}
\modelwrong{A} \hfill \gt{B}

\tcbitem[title={High-Stakes Event Production}]
I need to tell the lighting operator that the spare cable is under the console. The operator is running cues during a live performance. What should I do?

\vspace{0.25em}
\textbf{A.} Send a headset text/chat cue.\\
\textbf{B.} Tap and speak to the operator.

\vspace{0.35em}
\gt{A} \hfill \modelwrong{B}

\tcbitem[title={Corporate Administrative Workflows}]
I am documenting a toll error for payroll. Payroll scans receipts, while the toll agency requires the original toll receipt for a refund. What should I submit to payroll?

\vspace{0.35em}

\textbf{A.} The original toll receipt.\\
\textbf{B.} A receipt copy.

\vspace{0.35em}
\modelwrong{A} \hfill \gt{B}

\tcbitem[title={Agricultural Machinery Maintenance}]
The compact-tractor belt-slip case is open. The mechanic is at the ramp to watch the drive belt under load. For this step, what should I do?

\vspace{0.25em}
\textbf{A.} Bring the compact tractor to the ramp.\\
\textbf{B.} Walk to the ramp without the tractor.

\vspace{0.35em}
\gt{A} \hfill \modelwrong{B}

\end{tcbitemize}

\vspace{-0.5em}

\caption{
Representative examples from Real-Life Constrained QA. Each card shows a
complete two-option scenario, the ground-truth action, and an observed incorrect
model prediction. The examples illustrate key-selection failures in which a
salient associative shortcut conflicts with a prompt-grounded physical, spatial,
procedural, or medium-specific constraint.
}
\label{fig:dataset-examples}
\end{figure}

\section{Empirical Findings}
\label{sec:results}

We evaluate 2,925 Scientist QA questions across four model families and two
thinking settings, together with 500 Real-Life Constrained QA questions. All runs disable external tools, including web search.
For Scientist QA, the primary setting is the names-only prompt, where the model
sees only the two candidate names and must retrieve the decisive relation
internally. We additionally report a retrieval-relaxed profiles-in-context
control, where both candidate profiles are supplied in the prompt. Each
Scientist QA item is also paired with two closed-book probes targeting the
decisive relation. Across the primary names-only Scientist QA runs, only two
responses, both from Claude-low, fail to match either candidate after
normalization; we count these off-option responses as hallucinations.

\subsection{Retrieval-Sensitive Disambiguation Remains Difficult}
\label{sec:main_results}

Table~\ref{tab:results-profiles} compares the primary retrieval-sensitive
Scientist QA setting with the retrieval-relaxed profile control. In the
names-only prompt condition, hallucination rates range from 2.50\% to 37.23\%.
In the profiles-in-context condition, the maximum error rate is 3.38\%, and six
of the eight model settings achieve zero error. Thus, Scientist QA is not
primarily testing whether models can compare two supplied profiles; it tests
whether they can retrieve and apply the decisive relation when only the
candidate names and ambiguous description are given.

\begin{table}[t]
\centering
\footnotesize
\setlength{\tabcolsep}{2.5pt}
\begin{tabular*}{\columnwidth}{@{}p{0.16\columnwidth}p{0.22\columnwidth}rr@{}}
\toprule
\makecell[c]{Model\\ version} & 
\makecell[c]{Inference \\setting}  &
\makecell[c]{Names-attached\\errors / rate} &
\makecell[c]{Profiles-attached\\errors / rate} \\
\midrule

Claude        & low thinking      & 699 / 23.90\%  & 5 / 0.17\% \\
Claude        & high thinking     & 182 / 6.22\%   & 0 / 0.00\% \\
DeepSeek       & non-reasoning     & 1089 / 37.23\% & 99 / 3.38\% \\
DeepSeek   & reasoning         & 309 / 10.56\%  & 0 / 0.00\% \\
Gemini   & low thinking      & 73 / 2.50\%    & 0 / 0.00\% \\
Gemini   & high thinking     & 92 / 3.15\%    & 0 / 0.00\% \\
GPT                  & low thinking      & 344 / 11.76\%  & 0 / 0.00\% \\
GPT                  & high thinking     & 300 / 10.26\%  & 0 / 0.00\% \\
\bottomrule
\end{tabular*}
\caption{
Scientist QA results over 2,925 questions for frontier model versions evaluated
with external tools disabled. The names-only prompt condition corresponds to
\textit{prepend\_names}: the model receives only the two candidate names and
must retrieve the decisive relation internally. The profiles-in-context prompt
condition corresponds to \textit{prepend\_profiles}: the model receives both
candidate profiles in the prompt. Entries report the number and percentage of
errors.
}
\label{tab:results-profiles}
\end{table}

Thinking effort has model-dependent effects in the names-only setting. Higher
thinking substantially reduces errors for Claude, from 23.90\% to 6.22\%, and
for DeepSeek, from 37.23\% to 10.56\%. It modestly improves GPT, from 11.76\% to
10.26\%. By contrast, Gemini-low achieves the best result in this setting at
2.50\%, while Gemini-high is slightly worse at 3.15\%, showing that additional
inference effort is not a monotone solution.

\subsection{Direct Probes Separate Ignorance from Knowledge-Deployment Failure}
\label{sec:knowledge_decomposition}

Each Scientist QA item has two closed-book probes targeting the decisive
relation: one eliminative probe for the distractor and one compatibility probe
for the correct candidate. Table~\ref{tab:results-main} conditions pairwise
hallucination on these probe outcomes.

\begin{table*}[t]
\centering
\small
\setlength{\tabcolsep}{4.2pt}
\begin{tabular}{llcccccc}
\toprule
Model & Mode &
\makecell[c]{Pairwise\\hall.} &
\makecell[c]{Both probes\\correct} &
\makecell[c]{Hall. $\mid$\\both} &
\makecell[c]{Hall. $\mid$\\not both} &
\makecell[c]{Known-fact\\hall.} &
\makecell[c]{Probe-absent\\hall.} \\
\midrule
Claude Sonnet 4.6   & low  & 23.90\% & 76.34\% & 18.99\% & 39.74\% & 60.66\% & 1.29\% \\
Claude Sonnet 4.6   & high &  6.22\% & 86.26\% &  2.62\% & 28.86\% & 36.26\% & 4.95\% \\
DeepSeek V3.2 Chat & low  & 37.23\% & 59.52\% & 34.46\% & 41.30\% & 55.10\% & 1.93\% \\
DeepSeek V3.2 Reasoner & high & 10.56\% & 79.28\% &  6.04\% & 27.89\% & 45.31\% & 5.50\% \\
Gemini 3.1 Pro Preview   & low  &  2.50\% & 97.13\% &  2.01\% & 19.05\% & 78.08\% & 0.00\% \\
Gemini 3.1 Pro Preview   & high &  3.15\% & 97.74\% &  2.59\% & 27.27\% & 80.43\% & 0.00\% \\
GPT-5.2      & low  & 11.76\% & 85.54\% &  7.91\% & 34.52\% & 57.56\% & 3.49\% \\
GPT-5.2      & high & 10.26\% & 87.04\% &  6.25\% & 37.20\% & 53.00\% & 2.67\% \\
\bottomrule
\end{tabular}
\caption{
Probe-conditioned results for the names-only prompt condition over 2,925
Scientist QA questions. ``Hall. $\mid$ both'' and ``Hall. $\mid$ not both''
condition pairwise hallucination on whether both probes are correct.
``Known-fact hall.'' and ``Probe-absent hall.'' report the fractions of
hallucinations where both probes are correct or both probes are wrong,
respectively.
}
\label{tab:results-main}
\end{table*}

Probe knowledge helps but cannot fully explain the errors. Hallucination is
much higher when not both probes are correct, yet many errors remain in
the both-probe-correct regime. Complete probe-level ignorance accounts for at most 5.50\% of errors. Thus, many are not simply missing
facts. The model can answer the relevant facts in isolation but still fail to deploy
them in pairwise disambiguation.

\subsection{Raw Fame Does Not Explain the Shortcut} \label{sec:fame_analysis_main} Because our theory emphasizes frequency-induced shortcuts, we test whether the observed Scientist QA failures reduce to a simpler fame prior. For each scientist, we define a fame score using the normalized page-view count of their Wikipedia page, the normalized length of that page, and the normalized number of external links from that page. The wrong candidate is more famous in 61.30\% of Scientist QA questions. However, hallucination does not increase in those cases. In fact, for every model setting, hallucination is lower when the wrong candidate is more famous than when it is not; for example, GPT-high drops from 13.52\% to 8.20\%, Claude-low from 34.19\% to 17.40\%, and DeepSeek-low from 41.25\% to 34.69\%. The same pattern holds from the perspective of hallucinated cases: among hallucinations, the wrong candidate is more famous only 44.64\% to 57.12\% of the time, below the dataset base rate of 61.30\%. Moreover, very famous candidates often make the task easier rather than harder. When at least one candidate is in the top 1\% by fame rank, hallucination rates are lower in all eight names-only model settings; for instance, GPT-high falls from 10.87\% to 5.11\%, and DeepSeek-low falls from 38.36\% to 27.80\%. Full fame-based analyses are in Appendix~\ref{app:fame_analysis}. These results suggest that the shortcut is not a raw preference for famous names, but a relation-specific association between entities and attributes, such as institutions, awards, roles, or fields, that can override the prompt's decisive constraint.

\subsection{Everyday Constraints Induce the Same Failure Pattern}
\label{sec:real_life_results}

Real-Life Constrained QA tests whether the same failure pattern appears outside
encyclopedic entity disambiguation. Across 500 two-option scenarios,
Claude, GPT, Gemini, and DeepSeek-chat make 81, 44, 18, and 182 errors,
respectively, corresponding to error rates of 16.20\%, 8.80\%, 3.60\%, and
36.40\% (Table~\ref{tab:real_life_results_appendix}). These errors occur when a
salient shortcut action conflicts with a physical, spatial, procedural, or
medium-specific constraint stated in the prompt. Thus, inference misalignment is
not limited to biographical facts; it also appears in everyday action choice.

\section{Conclusion}
\label{sec:conclusion}

We study hallucination as a form of \emph{misaligned inference}: a model may possess the relevant facts, yet still follow a statistically dominant shortcut path that is inconsistent with the prompt's decisive constraint. We formalize this view through a latent key--task model, showing how pretraining-frequency imbalance can suppress the correct inference path and induce a non-vanishing hallucination floor. To evaluate this perspective, we introduce a scientist disambiguation benchmark built from highly confusable Wikipedia profiles. By pairing each primary question with supplementary factual probes, we separate factual ignorance from inference failure. Across frontier models, many errors occur even when both probes are answered correctly, while providing explicit profile context nearly eliminates the errors. These findings suggest that hallucination is often not a failure of knowledge storage, but a failure to deploy known facts along the correct inference path. Our results highlight the need for methods that go beyond adding factual coverage, and instead improve how models select, weight, and execute latent inference paths under competing cues.

\section*{Limitations}
\label{sec:limitations}
This research has some limitations. First, though we covered several frontier model families, our results remain limited only to
the tested models: GPT 5.2, Gemini 3.1 Pro Preview, Claude Sonnet 4.6 and DeepSeek V3.2 chat/reasoning. We have explicitly reported the thinking settings, and API versions, but reruns may differ as
provider-hosted systems change.  Besides, although we aim to construct questions whose answers are stable over time, some
items may still be affected by temporal drift. Scientific breakthroughs,
technological changes, or industry practice may alter what is regarded as
common sense, and scientists may later receive new honors, change
positions, or become associated with new fields as their careers continue.
Thus, future evaluations should treat the released answers as tied to the
dataset construction time and re-audit items when using TrapQA in substantially
later model evaluations.

\bibliography{reference}

\appendix
\clearpage

\section{Artifact licenses and terms}
Scientist QA uses public Wikipedia-linked scientist profiles and Wikidata QIDs.
Wikipedia text is available under CC BY-SA 4.0 unless otherwise noted, while
Wikidata structured data is released under CC0
\citep{wikipedia_contributors_2026,wikidata_contributors_2026}. Real-Life
Constrained QA uses \textsc{SWOW} only for seed selection; we cite the original
SWOW resource and do not redistribute raw SWOW participant records or
cue--response tables. The released benchmark package contains derived QA items,
labels, prompts, and saved evaluation outputs, with the final redistribution
license stated in the repository README.

\section{Artifact use and intended use}
We use existing artifacts only for research and diagnostic evaluation. Wikipedia-
and Wikidata-derived scientist information is used to construct public-profile
disambiguation questions; SWOW is used only for non-commercial seed selection,
and we do not redistribute raw SWOW data. The new TrapQA artifacts are intended
for research on hallucination, knowledge deployment, and constraint-sensitive
reasoning, not for deployment certification, individual assessment, or commercial
redistribution of source-derived data.

\section{Additional Related Work}
\label{app:related_work_extended}

This appendix expands the related-work discussion from Section~\ref{sec:related_work}, covering theoretical accounts, pipeline-stage sources of hallucination, reinforcement learning from feedback, and evaluation benchmarks.

\subsection{Mechanisms and Sources of Hallucination}
\label{app:hallucination_mechanisms}

\paragraph{Pre-LLM hallucination.}
Hallucination has long been studied in language generation. In data-to-text generation, \citet{wiseman-etal-2017-challenges} showed that neural models can produce fluent outputs that nevertheless fail to faithfully reflect the underlying records. In neural machine translation, \citet{lee2019hallucinations} analyzed hallucinations as spurious translations unrelated to the source text, a phenomenon further studied by \citet{raunak2021}. Similarly, \citet{maynez-etal-2020-faithfulness} found that neural summarization models frequently generate content that is not faithful to the source document. Although these settings differ in task formulation, they share the common problem that models may produce plausible text that is insufficiently grounded in the conditioning input.

\paragraph{Theoretical and mechanistic accounts.}
Several studies examine hallucination from the perspective of fundamental limitations. \citet{xu2025} show, in a formal learning-theoretic setting, that LLMs cannot learn all computable functions and therefore cannot completely avoid hallucination when used as general-purpose problem solvers. \citet{kalai2024} derive a statistical lower bound on hallucination for calibrated language models on certain classes of facts, suggesting that hallucination cannot be eliminated solely through better calibration. From a mechanistic perspective, \citet{sun2025llm} propose a subsequence-association framework for tracing hallucinations, arguing that hallucinations can arise when dominant hallucinatory associations outweigh faithful ones during generation. More recently, \citet{cherukuri2026} analyze hallucination through a dynamical-systems view of hidden-state trajectories, in which hallucination behavior is characterized by task-dependent latent-space basin structure.

These accounts are closely related to our work in treating hallucination as a competition between faithful and unfaithful associations. Our framework differs by focusing on prompts in which a decisive local constraint conflicts with a statistically salient shortcut. This setting allows us to study not only whether a model knows the relevant facts, but also whether it retrieves and applies the constraint-sensitive inference path required by the prompt.

\paragraph{Training, fine-tuning, and inference.}
Hallucinations can arise from multiple stages of the LLM pipeline, including pretraining, post-training, and inference. At the pretraining level, distributional imbalance can make false or misleading continuations more probable than correct ones, particularly when correct facts are rare or expressed inconsistently \citep{zhang2025measuring,liu2026pretrainrl}. More broadly, noisy, outdated, or contradictory training data may contribute to unsupported generations \citep{Ji_2023,Huang_2025}. \citet{kalai2025} further argue that modern training and evaluation procedures can reward guessing over acknowledging uncertainty, causing models to produce plausible answers even when they should abstain.

Hallucinations may also persist or emerge during fine-tuning. Several studies find that language models struggle to acquire new factual knowledge through fine-tuning \citep{gekhman2024,kang2024,lin2024flame,ren2024}. In particular, fine-tuning examples that introduce new knowledge may be learned more slowly than examples consistent with the model's pre-existing knowledge, and once learned, may increase hallucination on previously acquired facts \citep{gekhman2024}. Related work also shows that fine-tuning can differentially affect popular and unpopular factual knowledge, with models fine-tuned on more widely known facts tending to achieve higher factual accuracy than those fine-tuned on less popular facts \citep{ghosal2024}.

At inference time, hallucinations can be amplified by prompt ambiguity, decoding behavior, and reliance on memorized or frequency-biased patterns \citep{zhang2023languagemodelhallucinationssnowball,Huang_2025}. \citet{mckenna2023} identify attestation and relative-frequency biases in natural language inference as sources of hallucination-like errors, showing that models may rely on whether a hypothesis is attested in pretraining data rather than on the provided premise. \citet{berglund2024} expose a related limitation, the reversal curse, in which models trained on facts in one direction fail to reliably answer semantically equivalent queries in the reverse direction. \citet{zheng2023does} analyze ChatGPT failures in open-domain question answering and identify factuality, knowledge memorization, and knowledge recall as central sources of error. Together, these studies suggest that hallucination is not merely a matter of missing knowledge; it can also reflect failures in retrieving, comparing, or applying knowledge under the constraints of a particular prompt.

\paragraph{Reinforcement learning from feedback.}
Reinforcement learning from human feedback (RLHF) builds on preference-based reinforcement learning and human-preference fine-tuning of language models \citep{christiano2023,ziegler2020}. Its effect on hallucination remains debated. On one hand, \citet{ouyang2022} report that instruction tuning with human feedback improves truthfulness on several evaluations. On the other hand, reward models can be imperfect proxies for truthfulness, and optimizing against them may encourage reward hacking or plausible-sounding responses that satisfy human preferences without being fully faithful \citep{casper2023}. Recent work therefore proposes reinforcement-learning objectives that explicitly penalize hallucination or reward truthful abstention \citep{lin2025harnessing,wei2025truth,zhang2024r}. These approaches complement our work: rather than proposing a mitigation objective, we construct diagnostic settings that expose when models select a shortcut inference path despite having access to the relevant facts in closed-book probes.

\subsection{Hallucination Evaluation Benchmarks}
\label{app:hallucination_benchmarks}

Evaluating hallucination in language generation has attracted sustained attention, with benchmarks spanning different tasks, grounding sources, and annotation strategies. For text-only generation, early resources established task-specific foundations in summarization faithfulness, table-to-text fidelity, and open-domain factuality \citep{kryscinski2019,wang2020,pagnoni2021,fabbri2021,parikh2020totto,honovich2022,lin-etal-2022-truthfulqa,li-etal-2023-halueval,cheng2023evaluat,dale2023}. More recent work has shifted toward large-scale and increasingly automated factuality evaluation. FActScore \citep{min2023factscore} decomposes long-form outputs into atomic facts and evaluates whether each is supported by a reliable knowledge source. LongFact \citep{wei2024long} targets factuality in extended, open-domain responses, while SimpleQA \citep{wei2024measuring} provides short, fact-seeking questions with single, unambiguous answers and explicit grading of correct, incorrect, and not-attempted responses. WildHallucinations \citep{zhao2024wild} evaluates long-form factuality on real-world entity queries, including many entities outside Wikipedia coverage.

A parallel line of work examines hallucination in retrieval-augmented generation (RAG). RAGTruth \citep{niu2024ragtruth} provides human annotations of hallucinations in naturally generated RAG outputs, including word-level labels. FRAMES \citep{krishna2025} evaluates factuality, retrieval accuracy, and reasoning in end-to-end RAG scenarios, especially under multi-hop reasoning demands. In multimodal settings, researchers study hallucination in vision-language models, including object-level, relation-level, and broader multimodal hallucination detection settings \citep{guan2024,chen2024unified}. Collectively, these benchmarks reflect a progression from task-specific faithfulness evaluation toward broader, multi-task, and more automated hallucination assessment.

Our benchmark is complementary to these evaluation efforts. Existing benchmarks primarily measure whether a model's output is factual, faithful, or grounded in evidence. By contrast, Scientist QA and Real-Life Constrained QA are designed to isolate why a model fails in controlled forced-choice settings. Scientist QA tests whether models can deploy candidate-specific facts under disambiguation, while Real-Life Constrained QA tests whether models can override SWOW-derived associative cues with prompt-grounded physical, spatial, procedural, or medium-specific constraints \citep{de2019small}. This design allows us to distinguish simple ignorance from knowledge-deployment failures in which the relevant information is available to the model but not used in the task-relevant inference path.

\section{Scientist QA Construction Details}
\label{app:benchmark_details}

This appendix gives the full Scientist QA construction pipeline: profile collection, name-removed profile linearization, hard-pair mining, question generation, and filtering.

\subsection{Scientist Profiles}
\label{app:profile_processing}

We collect 9,090 scientists with dedicated Wikipedia pages. Each scientist is represented as a structured profile containing attributes such as occupation, field, notable work, awards, education, and a Wikidata QID. The QID is used only for bookkeeping, deduplication, and linking candidates across processing stages. Appendix~\ref{app:long_profile_example} shows an example profile.

For pair mining, we remove the scientist's name from each profile and linearize the remaining attributes into a short paragraph, which we call the \emph{name-removed profile}. This prevents the embedding model from matching scientists by name while preserving semantic information from their profile attributes. Appendix~\ref{app:embedding_ready_profile} shows an example.

\subsection{Hard-Pair Mining}
\label{app:hard_pair_mining}

Let $\mathbf e_A$ and $\mathbf e_B$ denote the embeddings of the name-removed profiles of scientists $A$ and $B$, obtained using \texttt{text-embedding-3-small} \citep{openai2024embedding3small}. Let $\mathrm{TC}(A)$ be the number of retained attribute fields for scientist $A$, and let $\lambda$ be the median tag count across all scientists; in our data, $\lambda=7$. We score each pair by
\begin{align}
s(A,B)=
&\frac{\mathbf e_A^\top \mathbf e_B}{\|\mathbf e_A\|\,\|\mathbf e_B\|} \cdot \notag \\
&\frac{\min(\mathrm{TC}(A),\mathrm{TC}(B))}
{\min(\mathrm{TC}(A),\mathrm{TC}(B))+\lambda}\label{eq:pair_similarity_appendix}
\end{align}
The cosine term measures semantic similarity, while the penalty term downweights pairs whose similarity may be driven by sparse metadata. We rank all scientist pairs by Eq.~\eqref{eq:pair_similarity_appendix} and retain the top $0.01\%$ highest-scoring pairs, yielding 2,958 candidate pairs.

\subsection{Question Generation and Filtering}
\label{app:question_generation_details}

For each candidate pair $(A,B)$, we prompt Gemini to generate a third-person biographical paragraph broadly compatible with both scientists, followed by a single decisive constraint that rules out exactly one candidate. Each question ends with \emph{``Who is this person?''} We then used ChatGPT to filter malformed or unverifiable outputs, including cases where the decisive constraint does not distinguish the pair, contradicts the candidate profiles, or cannot be verified against the paired candidates. After filtering and excluding the 33 invalid items identified in the final proofreading pass, the final evaluation set contains 2,925 questions.

Each retained question is evaluated in two variants:
\begin{enumerate}[leftmargin=*]
    \item \textit{prepend\_names}, which prepends only the two candidate names;
    \item \textit{prepend\_profiles}, which prepends the full profiles of both candidates.
\end{enumerate}
Each question is also paired with two supplementary probe questions derived from the decisive constraint, one for each candidate.

\begin{figure*}[t]
    \centering
    \includegraphics[width=\linewidth]{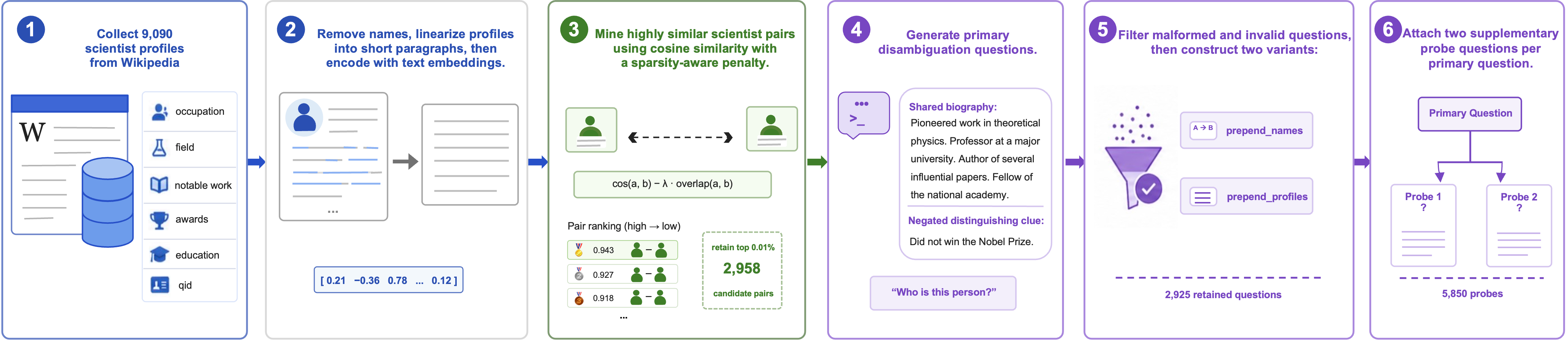}
    \caption{
    Overview of the Scientist QA construction pipeline. Starting from Wikipedia-linked scientist profiles, we construct highly confusable scientist pairs, generate pairwise disambiguation questions, and attach two supplementary probes to each primary question.
    }
    \label{fig:pipeline_overview}
\end{figure*}

\section{Implementation Details}
\label{app:implementation_details}

This appendix specifies the prompt formats, answer parsing rules, and failure handling used in evaluation.

\subsection{Model versions}

For reproducibility, Table~\ref{tab:model_versions} reports the model identifiers and inference settings used in evaluation. Exact access dates, provider-side parameters, and run identifiers should be preserved with the released run logs.

\begin{table*}[t]
\centering
\caption{Model identifiers and inference settings used in the evaluation. Low- and high-thinking settings are provider-specific controls; for DeepSeek, the two columns correspond to non-reasoning and reasoning model aliases.}
\label{tab:model_versions}
\begin{tabular}{llll}
\toprule
Family & Non-reasoning API ID & Reasoning API ID \\
\midrule
GPT-5.2 & \texttt{gpt-5.2-2025-12-11} & \texttt{gpt-5.2-2025-12-11}\\
Claude Sonnet 4.6 & \texttt{claude-sonnet-4-6} & \texttt{claude-sonnet-4-6} \\
Gemini 3.1 Pro Preview & \texttt{gemini-3.1-pro-preview} & \texttt{gemini-3.1-pro-preview} \\
DeepSeek V3.2& \texttt{deepseek-chat} & \texttt{deepseek-reasoner} \\
\bottomrule
\end{tabular}
\end{table*}

\subsection{Prompt Templates}
\label{app:prompt_templates}

We evaluate each benchmark item in separate conversations: one for the primary question and one for each supplementary probe.

\paragraph{Names-only prompt (\textit{prepend\_names}).}
\begin{quote}
\texttt{Choose one of the following two options as the answer to the question below:}\\
\texttt{1. }$A$\\
\texttt{2. }$B$\\
\texttt{Question:}\\
$\textit{question}_i$
\end{quote}
Here $A$ and $B$ are the two candidate scientists. Their order is randomized across items.

\paragraph{Profiles-in-context prompt (\textit{prepend\_profiles}).}
\begin{quote}
\texttt{Given two profiles of two persons:}\\
$\textit{profile}_A$\\
$\textit{profile}_B$\\
\texttt{Choose exactly one profile from the two, and output the name of the person as the answer to the following question:}\\
$\textit{question}_i$
\end{quote}

\paragraph{Supplementary probes.}
Each supplementary probe is asked independently as a binary factual question about one candidate and the decisive relation. For example:
\begin{quote}
\texttt{Did Albert Einstein receive the Nobel Prize in Physics?}
\end{quote}

\subsection{Answer Matching and Failure Handling}
\label{app:answer_matching}

For primary questions, the model is instructed to output exactly one of the two candidate names. We normalize whitespace, capitalization, and minor formatting differences before matching. If the normalized response matches the correct candidate, it is counted as correct; if it matches the distractor, it is counted as a hallucination. If the response matches neither candidate, it is also counted as a hallucination. Across the primary \textit{prepend\_names} Scientist QA experiments, only two unmatched primary-question responses remain after normalization, both from Claude-low.

For supplementary probes, binary answers are normalized to true/false labels. Each primary-question outcome is then paired with its two probe outcomes to determine whether the model \emph{knows both}, \emph{knows one}, or \emph{knows neither} of the relevant probe facts.
\clearpage
\section{Additional Examples}
\label{app:examples}

\subsection{Profile Example}
\label{app:long_profile_example}
Figure ~\ref{fig:structured-profile} provides a complete profile of Wolfgang Pauli.
\begin{figure*}[t]
\centering
\begin{tcolorbox}[
  width=\textwidth,
  colback=white,
  colframe=gray,
  title=\textbf{Example structured profile: Wolfgang Pauli},
  title filled=false
]
\small
\begin{verbatim}
{
  "Wolfgang Pauli": {
    "occupation": [
      "theoretical physicist",
      "university teacher",
      "chemist",
      "physicist"
    ],
    "award_received": [
      "Nobel Prize in Physics",
      "Max Planck Medal",
      "Lorentz Medal",
      "Foreign Member of the Royal Society",
      "honorary doctor of the University of Vienna"
    ],
    "field": [
      "quantum mechanics",
      "particle physics"
    ],
    "notable_work": [
      "Pauli exclusion principle",
      "Pauli matrices",
      "Pauli equation"
    ],
    "education": [
      "Ludwig-Maximilians-Universität München",
      "Bundesgymnasium Döbling"
    ],
    "qid": "Q65989"
  }
}
\end{verbatim}
\end{tcolorbox}
\caption{Example structured scientist profile used in Scientist QA.}
\label{fig:structured-profile}
\end{figure*}

\subsection{Name-Removed Profile Example}
\label{app:embedding_ready_profile}

\begingroup
\begin{tcolorbox}[colback=white, colframe=gray, title=\textbf{Example name-removed profile: Albert Einstein}, title filled=false]
occupation: inventor, mathematician, philosopher of science, physicist, professor, science writer, theoretical physicist, university teacher; field: theoretical physics; notable work: general relativity, mass--energy equivalence, photoelectric effect, quantum mechanics, special relativity, theory of Brownian motion, theory of relativity; awards: Copley Medal, Foreign Member of the Royal Society, Franklin Medal, Max Planck Medal, Nobel Prize in Physics, Pour le Mérite; education: ETH Zurich, Luitpold-Gymnasium, University of Zurich; positions: professor.
\end{tcolorbox}
\endgroup

\subsection{Question Example}
\label{app:question_example}

\begingroup
\begin{tcolorbox}[colback=white, colframe=gray, title=\textbf{Primary question example}, title filled=false]
This prominent theoretical physicist, mathematician, and university teacher made significant contributions to science. In recognition of their work, they delivered the Josiah Willard Gibbs Lectureship and were elected a Foreign Member of the Royal Society. However, this scientist never received the Nobel Prize in Physics. Who is this person?
\end{tcolorbox}

\begin{tcolorbox}[colback=white, colframe=gray, title=\textbf{Paired supplementary probes}, title filled=false]
Did Albert Einstein receive the Nobel Prize in Physics?\\
Did Edward Witten receive the Nobel Prize in Physics?
\end{tcolorbox}

\begin{tcolorbox}[colback=white, colframe=gray, title=\textbf{\textit{prepend\_names} variant}, title filled=false]
Choose one of the following two options as the answer to the question below:\\
1. Edward Witten\\
2. Albert Einstein\\
Question:\\
This prominent theoretical physicist, mathematician, and university teacher made significant contributions to science. In recognition of their work, they delivered the Josiah Willard Gibbs Lectureship and were elected a Foreign Member of the Royal Society. However, this scientist never received the Nobel Prize in Physics. Who is this person?
\end{tcolorbox}
\endgroup

\begin{figure*}[t]
\centering
\begin{tcolorbox}[
  width=\textwidth,
  colback=white,
  colframe=gray,
  title=\textbf{\textit{prepend\_profiles} variant},
  title filled=false
]
\small
Given two profiles of two persons:
\begin{lstlisting}[
  basicstyle=\ttfamily\footnotesize,
  breaklines=true,
  columns=fullflexible
]
name: Edward Witten
occupation: mathematician; physicist; university teacher; theoretical physicist
award_received: Fields Medal; MacArthur Fellowship; Isaac Newton Medal; ...
field: physics; mathematical physics; string theory
education: Princeton University; University of Wisconsin--Madison; ...

name: Albert Einstein
occupation: theoretical physicist; philosopher of science; science writer; ...
award_received: Nobel Prize in Physics; Copley Medal; Franklin Medal; ...
field: theoretical physics
notable_work: general relativity; special relativity; photoelectric effect; ...
education: ETH Zurich; University of Zurich; ...
\end{lstlisting}

Choose exactly one profile from the two, and output the name of the person as the answer to the following question:

\medskip
\noindent
This prominent theoretical physicist, mathematician, and university teacher made significant contributions to science. In recognition of their work, they delivered the Josiah Willard Gibbs Lectureship and were elected a Foreign Member of the Royal Society. However, this scientist never received the Nobel Prize in Physics. Who is this person?
\end{tcolorbox}
\caption{Example \textit{prepend\_profiles} prompt variant. The ellipses in the profile example indicate omitted attributes for readability.}
\label{fig:prepend-profiles-example}
\end{figure*}

\section{Real-Life Constrained QA Construction Details}
\label{app:real_life_constrained_qa_details}

This appendix describes Real-Life Constrained QA, a collection of realistic two-option questions in which a locally plausible shortcut conflicts with a physical, spatial, procedural, or medium-specific constraint. Unlike Scientist QA, which tests entity disambiguation among highly similar scientists, this component targets shortcut-driven failures in everyday scenarios. Each item presents a short first-person situation and two candidate actions or media. One option is superficially attractive because it matches a strong prior association, while the other is correct because it satisfies the constraint implied by the scenario. The final collection contains 500 questions covering 13 aspects of daily life.

\subsection{Association Mining from \textsc{SWOW}}
\label{app:csqa_seed_mining}

We begin from \textsc{SWOW} \citep{de2019small}, a large-scale psycholinguistic resource of human word associations. For each cue word, we use high-probability first responses as candidate shortcut associations. We lightly normalize and filter these associations by lowercasing, lemmatizing, merging obvious duplicates, and removing generic or noisy responses. The result is a cleaned bank of human-salient cue--response pairs suitable for question generation.
We used several preprocessing packages for SWOW seed preprocessing. We use spaCy with the \texttt{en\_core\_web\_sm}
English pipeline for tokenization, POS/stopword checks, and lemmatization; NLTK
WordNet for coarse lexical-type labels from synsets and lexnames; and
\texttt{wordfreq} Zipf frequencies to filter overly common or rare responses.
We disable the spaCy parser for this preprocessing step and use heuristic
frequency thresholds of \texttt{high\_zipf=6.5} and \texttt{low\_zipf=1.0} in
the first-pass filter.

\subsection{Template Families and Seed Selection}
\label{app:csqa_template_families}

We organize cleaned associations into eight seed template families corresponding to recurring hidden-constraint patterns. Examples include \texttt{vehicle\_required}, where the task requires bringing a vehicle rather than merely reaching a location; \texttt{delivery\_medium}, where a physical item cannot be replaced by a digital surrogate; \texttt{recording\_medium}, where the correct action depends on the required recording modality; and \texttt{tool\_required}, where a specific tool is necessary for task completion.

For each seed, we annotate structured metadata, including the scenario role, latent constraint type, and intended shapes of the correct and shortcut options. We prioritize seeds whose associations are concrete, whose constraints are easy to instantiate in everyday settings, and whose shortcut options are plausible without being absurd. We also cap overrepresented lemmas within each family to maintain scenario diversity.

\subsection{Generation, Filtering, and Augmentation}
\label{app:csqa_generation_filtering}

For each selected seed, we use GPT to augment seed questions into multiple
first-person scenarios following the corresponding family template. Each
generated item must be realistic, self-contained, and unambiguous: the incorrect
option should be a plausible shortcut, while the correct option should be
determined by a recoverable constraint in the scenario. Claude is then used to
proofread the resulting questions for ambiguity, plausibility, and constraint
validity. We manually filter malformed or weak items, including cases where both
options are arguably valid, the constraint is too explicit, the scenario depends
on niche expertise, or the shortcut option is implausible.

\subsection{Benchmark Format}
\label{app:csqa_format}

Each Real-Life Constrained QA item consists of a short scenario, two candidate options and a gold label.

\section{Extended Empirical Results}
\label{app:extended_results}

This appendix provides the full empirical breakdowns supporting Section~\ref{sec:results}. Unless otherwise stated, all tables refer to the retrieval-sensitive \textit{prepend\_names} condition over 2,925 Scientist QA questions.

\subsection{Full Probe-State Breakdowns}
\label{app:probe_breakdowns}

For each pairwise question, we use two closed-book supplementary probes targeting the decisive relation. We group examples into three probe-defined knowledge states:
\begin{itemize}[leftmargin=*]
    \item \textbf{Knows both:} both supplementary probes are answered correctly;
    \item \textbf{Knows one:} exactly one supplementary probe is answered correctly;
    \item \textbf{Knows neither:} both supplementary probes are answered incorrectly.
\end{itemize}
Table~\ref{tab:probe_breakdown_appendix} reports correct and incorrect pairwise outcomes within each state. Correct answers are typically concentrated in the \emph{knows-both} bucket, while incorrect answers shift toward the \emph{knows-one} and \emph{knows-neither} buckets. However, the \emph{knows-both} rows still contain nonzero error rates, showing that correct probe-level knowledge does not guarantee correct comparative deployment.

\begin{table*}[t]
\centering
\scriptsize
\setlength{\tabcolsep}{3.2pt}
\begin{tabular}{llccc ccc ccc}
\toprule
Model & Mode &
\makecell[c]{Knows \\both\\correct} &
\makecell[c]{Knows \\both\\wrong} &
\makecell[c]{Wrong \\rate} &
\makecell[c]{Knows \\one\\correct} &
\makecell[c]{Knows \\one\\wrong} &
\makecell[c]{Wrong \\rate} &
\makecell[c]{Knows \\neither\\correct} &
\makecell[c]{Knows \\neither\\wrong} &
\makecell[c]{Wrong \\rate} \\
\midrule
Claude Sonnet 4.6   & high & 2457 & 66  & 2.62\%  & 279 & 107 & 27.72\% & 7  & 9  & 56.25\% \\
Claude Sonnet 4.6   & low  & 1809 & 424 & 18.99\% & 401 & 266 & 39.88\% & 16 & 9  & 36.00\% \\
DeepSeek V3.2 Chat & high & 2179 & 140 & 6.04\%  & 410 & 152 & 27.05\% & 27 & 17 & 38.64\% \\
DeepSeek V3.2 Reasoner & low  & 1141 & 600 & 34.46\% & 665 & 468 & 41.31\% & 30 & 21 & 41.18\% \\
Gemini 3.1 Pro Preview   & high & 2785 & 74  & 2.59\%  & 48  & 18  & 27.27\% & 0  & 0  & -- \\
Gemini 3.1 Pro Preview   & low  & 2784 & 57  & 2.01\%  & 68  & 16  & 19.05\% & 0  & 0  & -- \\
GPT-5.2      & high & 2387 & 159 & 6.25\%  & 229 & 133 & 36.74\% & 9  & 8  & 47.06\% \\
GPT-5.2      & low  & 2304 & 198 & 7.91\%  & 260 & 134 & 34.01\% & 17 & 12 & 41.38\% \\
\bottomrule
\end{tabular}
\caption{
Probe-conditioned breakdown of primary-question outcomes in the names-only Scientist QA condition. Each bucket is defined by the number of supplementary probes answered correctly. ``Correct'' and ``wrong'' count pairwise disambiguation outcomes within that bucket, and ``Wrong rate'' is computed within the corresponding bucket.
}
\label{tab:probe_breakdown_appendix}
\end{table*}

\subsection{Eliminative-Probe Asymmetry}
\label{app:probe_asymmetry}

The two supplementary probes play different diagnostic roles. One tests the fact that should eliminate the distractor; the other tests the compatibility of the correct candidate with the decisive constraint. Table~\ref{tab:probe_asymmetry_appendix} focuses on one-probe-correct cases. Across all eight model settings, hallucination is higher when the model misses the eliminative probe than when it misses the compatibility probe. This asymmetry supports the latent key--task account in Section~\ref{sec:freq}: the decisive relation is often not merely a fact about the correct candidate, but the fact that suppresses the shortcut candidate. When this eliminative fact is not retrieved, the high-salience candidate remains available as a plausible continuation.

\begin{table*}[t]
\centering
\small
\setlength{\tabcolsep}{4.2pt}
\begin{tabular}{llccccc}
\toprule
Model & Mode &
\makecell[c]{$n$ missing\\elim.} &
\makecell[c]{Hall. when\\elim. missed} &
\makecell[c]{$n$ missing\\compat.} &
\makecell[c]{Hall. when\\compat. missed} &
\makecell[c]{Gap} \\
\midrule
Claude Sonnet 4.6   & high & 128 & 28.91\% & 258 & 27.13\% & 1.77 \\
Claude Sonnet 4.6   & low  & 489 & 41.72\% & 178 & 34.83\% & 6.89 \\
DeepSeek V3.2 Chat & high & 272 & 29.04\% & 290 & 25.17\% & 3.87 \\
DeepSeek V3.2 Reasoner & low  & 298 & 46.64\% & 835 & 39.40\% & 7.24 \\
Gemini 3.1 Pro Preview   & high & 40  & 40.00\% & 26  & 7.69\%  & 32.31 \\
Gemini 3.1 Pro Preview   & low  & 35  & 31.43\% & 49  & 10.20\% & 21.22 \\
GPT-5.2      & high & 126 & 46.03\% & 236 & 31.78\% & 14.25 \\
GPT-5.2      & low  & 153 & 37.91\% & 241 & 31.54\% & 6.37 \\
\bottomrule
\end{tabular}
\caption{
Hallucination rates in one-probe-correct cases for the names-only Scientist QA condition. ``Elim.'' denotes the probe whose correct answer eliminates the distractor; ``compat.'' denotes the probe whose correct answer confirms the correct candidate's compatibility with the decisive constraint. The gap is the difference between the two hallucination rates.
}
\label{tab:probe_asymmetry_appendix}
\end{table*}

\subsection{Consensus Failures}
\label{app:consensus_failures}

To distinguish idiosyncratic model errors from shared shortcut directions, we identify questions missed by multiple model settings. Of the 2,925 questions, 1,489 are missed by at least one model setting, and 10 are missed by all eight settings. Table~\ref{tab:consensus_failures_appendix} lists these all-setting consensus failures. They concentrate on high-frequency biographical relation families such as education, awards, professional roles, and offices. In these cases, the distractor satisfies a salient affirmative association, while the correct answer is determined by an explicit incompatibility or non-possession constraint. These examples support the common-shortcut assumption in Theorem~\ref{thm:posterior}: different model families can be biased toward the same incorrect answer when a dominant association conflicts with the prompt's decisive constraint.

\begin{table*}[t]
\centering
\small
\setlength{\tabcolsep}{4pt}
\begin{tabular}{lll l}
\toprule
Question ID & Correct candidate & Distractor & Decisive relation family \\
\midrule
question\_0214 & Klaus von Klitzing & Rudolf Mössbauer & {Education / institution} \\
question\_0596 & Glenn T. Seaborg & Mildred Dresselhaus & {Education / institution} \\
question\_0797 & Jennifer Doudna & Frances Arnold & {Award / honor} \\
question\_1092 & Fritz Lipmann & Otto Heinrich Warburg & {Education / institution} \\
question\_1161 & Norman Foster Ramsey, Jr. & Carl Wieman & {Award / honor} \\
question\_1517 & Joseph-Louis Lagrange & François Arago & {Political office / role} \\
question\_1772 & Alexander R. Todd, Baron Todd & Svante Arrhenius & {Occupation / role} \\
question\_1981 & Harold Clayton Urey & Mildred Dresselhaus & {Education / institution} \\
question\_2183 & Steven Weinberg & Leon Max Lederman & {Award / honor} \\
question\_2370 & Robert Aumann & Gérard Debreu & {Education / institution} \\
\bottomrule
\end{tabular}
\caption{
The 10 Scientist QA questions missed by all eight model settings. Question IDs and candidate names are produced by the analysis notebook; relation-family labels are manual annotations based on the decisive constraint.
}
\label{tab:consensus_failures_appendix}
\end{table*}

\subsection{Probe-Underdetermined Cases}
\label{app:probe_underdetermined}

We refer to the one-probe-correct subset as \textbf{probe-underdetermined} from the model's local probe state. Since the two gold probe labels are complementary, answering exactly one probe correctly is equivalent to predicting the same value for both probes, either both true or both false. In this regime, the two probe predictions alone do not determine the correct pairwise answer for the model. Table~\ref{tab:probe_underdetermined_appendix} reports the size and behavior of this subset.

\begin{table*}[t]
\centering
\small
\setlength{\tabcolsep}{3.6pt}
\begin{tabular}{llcccccc}
\toprule
Model & Mode &
\makecell[c]{$n$ probe-\\underdet.} &
\makecell[c]{Pct. of\\questions} &
\makecell[c]{Accuracy} &
\makecell[c]{Hall. rate} &
\makecell[c]{Both predicted\\false} &
\makecell[c]{Both predicted\\true} \\
\midrule
Claude Sonnet 4.6   & high & 386  & 13.20\% & 72.28\% & 27.72\% & 33.16\% & 66.84\% \\
Claude Sonnet 4.6   & low  & 667  & 22.80\% & 60.12\% & 39.88\% & 73.31\% & 26.69\% \\
DeepSeek V3.2 Chat & high & 562  & 19.21\% & 72.95\% & 27.05\% & 48.40\% & 51.60\% \\
DeepSeek V3.2 Reasoner & low  & 1133 & 38.74\% & 58.69\% & 41.31\% & 26.30\% & 73.70\% \\
Gemini 3.1 Pro Preview   & high & 66   & 2.26\%  & 72.73\% & 27.27\% & 60.61\% & 39.39\% \\
Gemini 3.1 Pro Preview   & low  & 84   & 2.87\%  & 80.95\% & 19.05\% & 41.67\% & 58.33\% \\
GPT-5.2      & high & 362  & 12.38\% & 63.26\% & 36.74\% & 34.81\% & 65.19\% \\
GPT-5.2      & low  & 394  & 13.47\% & 65.99\% & 34.01\% & 38.83\% & 61.17\% \\
\bottomrule
\end{tabular}
\caption{
Results on probe-underdetermined cases in the names-only Scientist QA condition. ``Pct. of questions'' uses 2,925 as the denominator. ``Both predicted false'' and ``Both predicted true'' describe the model's two probe predictions within this subset. These cases are common for weaker settings, especially DeepSeek-low and Claude-low, but rare for Gemini.
}
\label{tab:probe_underdetermined_appendix}
\end{table*}

Table~\ref{tab:probe_underdetermined_fame_appendix} further shows that behavior in this regime is not explained by simply choosing the more famous scientist. In several settings, the model chooses the more famous candidate less than half the time.

\begin{table*}[t]
\centering
\small
\setlength{\tabcolsep}{4.5pt}
\begin{tabular}{llccc}
\toprule
Model & Mode &
\makecell[c]{$n$ non-tie} &
\makecell[c]{Chooses more\\famous} &
\makecell[c]{$p$ vs. 50\%} \\
\midrule
Claude Sonnet 4.6   & high & 386  & 46.89\% & 0.242 \\
Claude Sonnet 4.6   & low  & 667  & 44.68\% & 0.007 \\
DeepSeek V3.2 Chat & high & 562  & 49.47\% & 0.833 \\
DeepSeek V3.2 Reasoner & low  & 1133 & 46.07\% & 0.009 \\
Gemini 3.1 Pro Preview   & high & 66   & 50.00\% & 1.000 \\
Gemini 3.1 Pro Preview   & low  & 84   & 46.43\% & 0.586 \\
GPT-5.2      & high & 362  & 45.86\% & 0.127 \\
GPT-5.2      & low  & 394  & 46.70\% & 0.208 \\
\bottomrule
\end{tabular}
\caption{
Choice of the more famous candidate within probe-underdetermined, non-tie cases. ``Chooses more famous'' is the fraction of such cases in which the pairwise answer is the candidate with the higher fame score. The binomial test compares the observed rate to a 50\% baseline.
}
\label{tab:probe_underdetermined_fame_appendix}
\end{table*}

\subsection{Fame-Based Analyses}
\label{app:fame_analysis}

We examine whether hallucinations can be explained by a simple prior toward the more famous scientist. For each scientist $s$, we define
\begin{align*}
\mathrm{Fame}(s)=
\frac{1}{3}[
&\mathrm{norm}(\mathrm{pageLength}_s)\ +\\
&\mathrm{norm}(\mathrm{pageViews})\ +\\
&\mathrm{norm}(\mathrm{externalLinks}_s)
]
\end{align*}
where $\mathrm{norm}(\cdot)$ denotes corpus-level normalization across scientists. The fame rank is induced by this score.

We use the 2020-01-01$\ \text{\textasciitilde}\ $2025-12-31 calendar-year window for the page view count because it is the most recent complete multi-year window before our evaluation period\citep{wikimedia_pageviews_2026}. This window balances recency with robustness to short-term spikes in public attention and avoids using page-view data generated after the benchmark evaluation.

These analyses serve as negative controls. The wrong candidate is more famous in 61.30\% of benchmark questions, but among hallucinated cases this fraction is lower, ranging from 44.64\% to 57.12\% depending on the model setting. Table~\ref{tab:fame_direction_appendix} shows that hallucination rates are lower, not higher, when the incorrect candidate is more famous.

\begin{table*}[t]
\centering
\small
\setlength{\tabcolsep}{4.5pt}
\begin{tabular}{llccc}
\toprule
Model & Mode &
\makecell[c]{Hall. when wrong\\not more famous} &
\makecell[c]{Hall. when wrong\\more famous} &
\makecell[c]{Wrong more famous\\among hallucinations} \\
\midrule
Claude Sonnet 4.6   & high & 8.83\%  & 4.57\%  & 45.05\% \\
Claude Sonnet 4.6   & low  & 34.19\% & 17.40\% & 44.64\% \\
DeepSeek V3.2 Reasoner & high & 13.16\% & 8.92\%  & 51.78\% \\
DeepSeek V3.2 Chat & low  & 41.25\% & 34.69\% & 57.12\% \\
Gemini 3.1 Pro Preview   & high & 4.24\%  & 2.45\%  & 47.83\% \\
Gemini 3.1 Pro Preview   & low  & 3.53\%  & 1.84\%  & 45.21\% \\
GPT-5.2      & high & 13.52\% & 8.20\%  & 49.00\% \\
GPT-5.2      & low  & 15.37\% & 9.48\%  & 49.42\% \\
\bottomrule
\end{tabular}
\caption{
Fame-direction negative control for the names-only Scientist QA condition. The first two numeric columns condition on whether the wrong candidate is more famous by fame score. The final column reports, among hallucinated cases, the fraction in which the wrong candidate is more famous. Hallucination rates are consistently lower when the wrong candidate is more famous, indicating that the observed shortcut is not a simple more-famous-name prior.
}
\label{tab:fame_direction_appendix}
\end{table*}

\subsection{Confidence Diagnostics}
\label{app:confidence_diagnostics}

Accuracy and confidence are imperfect certificates of faithful reasoning. First, a model can sometimes answer the pairwise question correctly without answering both probes correctly; for example, only 62.15\% of DeepSeek-low's correct pairwise answers occur in the both-probe-correct regime. This suggests that some correct answers may be supported by shortcuts that happen to point to the correct candidate.

Second, self-reported confidence does not reliably separate correct from hallucinated answers across model families. Table~\ref{tab:confidence_appendix} reports confidence for correct and hallucinated pairwise answers. Hallucinated answers are usually less confident than correct answers, but the absolute confidence remains high in many settings. DeepSeek-low is especially poorly separated: hallucinated and correct answers have nearly identical mean confidence.

\begin{table*}[t]
\centering
\small
\setlength{\tabcolsep}{4.5pt}
\begin{tabular}{llccc}
\toprule
Model & Mode &
\makecell[c]{Mean conf.\\correct} &
\makecell[c]{Mean conf.\\hallucinated} &
\makecell[c]{Gap} \\
\midrule
Claude Sonnet 4.6   & high & 88.06 & 74.19 & -13.87 \\
Claude Sonnet 4.6   & low  & 72.16 & 62.46 & -9.70 \\
DeepSeek V3.2 Chat & high & 92.79 & 86.48 & -6.31 \\
DeepSeek V3.2 Reasoner & low  & 84.89 & 84.93 & 0.04 \\
Gemini 3.1 Pro Preview   & high & 99.23 & 92.55 & -6.68 \\
Gemini 3.1 Pro Preview   & low  & 97.38 & 76.23 & -21.15 \\
GPT-5.2      & high & 91.10 & 81.35 & -9.76 \\
GPT-5.2      & low  & 90.81 & 83.55 & -7.27 \\
\bottomrule
\end{tabular}
\caption{
Mean self-reported confidence for correct and hallucinated pairwise answers in the names-only Scientist QA condition. The gap is hallucinated confidence minus correct confidence. Confidence separates correct and incorrect answers for some models, but not reliably across model families.
}
\label{tab:confidence_appendix}
\end{table*}

\subsection{Real-Life Constrained QA Results}
\label{app:real_life_results}

Real-Life Constrained QA contains 500 two-option scenarios covering 13 aspects of daily life. Table~\ref{tab:real_life_results_appendix} reports the final error counts and rates for the evaluated models.

\begin{table}[t]
\centering
\small
\setlength{\tabcolsep}{6pt}
\begin{tabular}{lc}
\toprule
Model & Errors / rate \\
\midrule
Claude Sonnet 4.6 & 81 / 16.20\% \\
DeepSeek-chat & 182 / 36.40\% \\
GPT-5.2 & 44 / 8.80\% \\
Gemini 3.1 Pro Preview & 18 / 3.6\%\\
\bottomrule
\end{tabular}
\caption{
Real-Life Constrained QA results over 500 questions covering 13 aspects of daily life. Entries report the number and percentage of incorrect shortcut selections.
}
\label{tab:real_life_results_appendix}
\end{table}

\section{Potential risks}
TrapQA is intended as a diagnostic benchmark, not as a training set or a broad
certificate of hallucination robustness. Public release may enable overfitting
or contaminate future model training/evaluation, so later results should be
interpreted with this risk in mind. We are working with the community to expand
\textsc{Real-Life Constrained QA} and extend entity disambiguation beyond
scientists to domains such as sports players and music composers; such extensions
should be reported separately unless results are recomputed.

\section{Data Contains Personally Identifying Info Or Offensive Content}
\textsc{ScientistQA} uses public Wikipedia/Wikidata-linked scientist profiles,
which makes the task verifiable but introduces coverage biases toward scientists
with richer public or English-language records. Because the task distinguishes
real scientists, names and public biographical facts are not anonymized. We
release only public attributes needed for the diagnostic task and exclude private
contact information, images, surveillance data, and other private personal data.
\textsc{Real-Life Constrained QA} is synthetic and filtered for ambiguity,
plausibility, and inappropriate or offensive content.

\section{Proof for Section~\ref{sec:freq}}
\label{app:freq}
\subsection{Posterior decomposition under the latent key--task model}
\label{appsubsec:freq_post}
In this appendix, we make explicit the hierarchical posterior structure
implicit in the latent key--task model.
Recall that the pretraining prior over latent pairs factorizes as
\[
\pi(k,t)=\pi^{(k)}(k)\,\pi^{(t)}(t\mid k).
\]
Accordingly, for a prompt $\bm z$, we consider the hierarchical posterior
decomposition
\[
P(k,t\mid \bm z)=P(k\mid \bm z)\,P(t\mid k,\bm z),
\]
where
\[
P(k\mid \bm z)
=
\frac{P(\bm z\mid k)\,\pi^{(k)}(k)}
{\sum_{k'\in\mathcal K} P(\bm z\mid k')\,\pi^{(k)}(k')},
\]
and
\[
P(t\mid k,\bm z)
=
\frac{P(\bm z\mid k,t)\,\pi^{(t)}(t\mid k)}
{\sum_{t'\in\mathcal T} P(\bm z\mid k,t')\,\pi^{(t)}(t'\mid k)}.
\]
Therefore,
\begin{align*}
P(k,t\mid \bm z)
=
&\frac{P(\bm z\mid k)\,\pi^{(k)}(k)}
{\sum_{k'\in\mathcal K} P(\bm z\mid k')\,\pi^{(k)}(k')}
\cdot\\
&\frac{P(\bm z\mid k,t)\,\pi^{(t)}(t\mid k)}
{\sum_{t'\in\mathcal T} P(\bm z\mid k,t')\,\pi^{(t)}(t'\mid k)}.
\end{align*}

For brevity in the proof below, we write
\[
\pi^{(k)}_\star := \pi^{(k)}(k^\ast),\qquad
\pi^{(k)}_{(s)} := \pi^{(k)}(k_{(s)}),
\]
and
\[
\pi^{(t)}_\star := \pi^{(t)}(t^\ast\mid k^\ast),\qquad
\pi^{(t)}_{(s)} := \pi^{(t)}(t_{(s)}\mid k_s).
\]

\subsection{Proof of Theorem~\ref{thm:posterior}}
\label{appsubsec:freq_post_bound}

\begin{proof}
We factorize the joint posterior into key and task components:
\[
\frac{P(k_s, t_s \mid \bm z)}{P(k^\ast, t^\ast \mid \bm z)}
\;=\;
\frac{P(k_s \mid \bm z)}{P(k^\ast \mid \bm z)}
\cdot
\frac{P(t_s \mid \bm z, k_s)}{P(t^\ast \mid \bm z, k^\ast)}.
\]

By Assumption~\ref{ass:event}(i), $P(k \in \{k^\ast, k_s\} \mid \bm z) \approx 1$, so for $k \in \{k^\ast, k_s\}$,
\[
P(k \mid \bm z) \;\approx\; P(k \mid \bm z,\, k \in \{k^\ast, k_s\}).
\]
By Assumption~\ref{ass:event}(ii), $\bm z$ is independent of $k$ within the candidate pair, hence
\begin{align*}
&P(k \mid \bm z,\, k \in \{k^\ast, k_s\}) \;=\; P(k \mid k \in \{k^\ast, k_s\}) \\
&\;=\; \frac{\pi^{(k)}(k)}{\pi^{(k)}(k^\ast) + \pi^{(k)}(k_s)}.
\end{align*}
Taking the ratio at $k = k_s$ and $k = k^\ast$,
\[
\frac{P(k_s \mid \bm z)}{P(k^\ast \mid \bm z)} \;\approx\; \frac{\pi^{(k)}(k_s)}{\pi^{(k)}(k^\ast)}.
\]

By Assumption~\ref{ass:event_task}(i), conditional on the activated key $k$, the task posterior concentrates on $\{t^\ast, t_s\}$, so for $t \in \{t^\ast, t_s\}$ and $k \in \{k^\ast, k_s\}$,
\[
P(t \mid \bm z, k) \;\approx\; P(t \mid \bm z, k,\, t \in \{t^\ast, t_s\}).
\]
By Assumption~\ref{ass:event_task}(ii), $\bm z$ is independent of $t$ given $k$ within the candidate task pair, so
\begin{align*}
&P(t \mid \bm z, k,\, t \in \{t^\ast, t_s\}) 
\;=\; P(t \mid k,\, t \in \{t^\ast, t_s\}) \\
&\;=\; \frac{\pi^{(t)}(t \mid k)}{\pi^{(t)}(t^\ast \mid k) + \pi^{(t)}(t_s \mid k)}.
\end{align*}
Evaluating at $(t, k) = (t_s, k_s)$ and $(t^\ast, k^\ast)$ and taking the ratio,
\begin{align*}
\frac{P(t_s \mid \bm z, k_s)}{P(t^\ast \mid \bm z, k^\ast)} \;\approx\; &\frac{\pi^{(t)}(t_s \mid k_s)}{\pi^{(t)}(t^\ast \mid k^\ast)} \cdot \\
&\frac{\pi^{(t)}(t^\ast \mid k^\ast) + \pi^{(t)}(t_s \mid k^\ast)}{\pi^{(t)}(t^\ast \mid k_s) + \pi^{(t)}(t_s \mid k_s)}.
\end{align*}
The second factor is a ratio of normalization constants over the candidate task pair, which we absorb into the $\approx$ symbol as it is bounded and does not depend on $\bm z$:
\[
\frac{P(k_s, t_s \mid \bm z)}{P(k^\ast, t^\ast \mid \bm z)}
\;\approx\;
\frac{\pi^{(k)}(k_s)}{\pi^{(k)}(k^\ast)}
\cdot
\frac{\pi^{(t)}(t_s \mid k_s)}{\pi^{(t)}(t^\ast \mid k^\ast)}.
\]

By the law of total probability over key--task pairs, the marginal output probability decomposes as
\[
P(y \mid \bm z) \;=\; \sum_{k, t} P(y \mid \bm z; k, t)\, P(k, t \mid \bm z).
\]
By Assumption~\ref{ass:pairwise_output_separation}, only the shortcut path contributes non-negligibly to $y_s$ and only the correct path contributes non-negligibly to $y^\ast$:
\[
P(y_s \mid \bm z; k^\ast, t^\ast) \ll 1, \qquad
P(y^\ast \mid \bm z; k_s, t_s) \ll 1.
\]
Hence,
\[
P(y_s \mid \bm z) \;\approx\; P(y_s \mid \bm z; k_s, t_s)\, P(k_s, t_s \mid \bm z),
\]
\[
P(y^\ast \mid \bm z) \;\approx\; P(y^\ast \mid \bm z; k^\ast, t^\ast)\, P(k^\ast, t^\ast \mid \bm z).
\]
Taking the ratio,
\[
\frac{P(y_s \mid \bm z)}{P(y^\ast \mid \bm z)}
\;\approx\;
\frac{P(k_s, t_s \mid \bm z)}{P(k^\ast, t^\ast \mid \bm z)}
\cdot
\frac{P(y_s \mid \bm z; k_s, t_s)}{P(y^\ast \mid \bm z; k^\ast, t^\ast)}.
\]

\begin{align*}
\frac{P(y_s \mid \bm z)}{P(y^\ast \mid \bm z)}
\;\gtrsim\;
&\frac{\pi^{(k)}(k_s)}{\pi^{(k)}(k^\ast)}
\cdot
\frac{\pi^{(t)}(t_s \mid k_s)}{\pi^{(t)}(t^\ast \mid k^\ast)}
\cdot\\
&\frac{P(y_s \mid \bm z; k_s, t_s)}{P(y^\ast \mid \bm z; k^\ast, t^\ast)}.
\end{align*}
The second inequality in the theorem statement follows from the shortcut-frequency dominance condition (both pretraining-prior ratios are $\geq 1$ by the definition of the shortcut path) together with Assumption~\ref{ass:pairwise_output_separation}, which gives $P(y_s \mid \bm z; k_s, t_s) \geq P(y^\ast \mid \bm z; k^\ast, t^\ast)$.
\end{proof}

\subsection{Proof of Theorem~\ref{thm:pairwise_tv_lower_bound}}
\label{app:lower_bound}
\begin{proof}
By the latent key--task decomposition, the model prediction can be written as
\[
P(y\mid \bm z)
=
\sum_{k,t}P(k,t\mid \bm z)P(y\mid \bm z;k,t).
\]
Restricting to the two relevant paths gives the contributions
\[
P(y^\ast\mid \bm z)
\ge
q^\ast P(y^\ast\mid \bm z;k^\ast,t^\ast)
+
q_s P(y^\ast\mid \bm z;k_s,t_s),
\]
and
\[
P(y_s\mid \bm z)
\ge
q_s P(y_s\mid \bm z;k_s,t_s)
+
q^\ast P(y_s\mid \bm z;k^\ast,t^\ast).
\]
Under Assumption~\ref{ass:pairwise_output_separation},
\[
P(y^\ast\mid \bm z;k_s,t_s)=0,
\qquad
P(y_s\mid \bm z;k^\ast,t^\ast)=0.
\]
Thus, the two dominant contributions reduce to
\[
P(y^\ast\mid \bm z)
\approx
q^\ast P(y^\ast\mid \bm z;k^\ast,t^\ast),
\]
and
\[
P(y_s\mid \bm z)
\approx
q_s P(y_s\mid \bm z;k_s,t_s).
\]
Since
\[
q_s>q^\ast
\]
and
\[
P(y_s\mid \bm z;k_s,t_s)
\ge
P(y^\ast\mid \bm z;k^\ast,t^\ast),
\]
we obtain
\[
P(y_s\mid \bm z)>P(y^\ast\mid \bm z).
\]
Therefore,
\[
\gamma(\bm z)
:=
P(y_s\mid \bm z)-P(y^\ast\mid \bm z)>0.
\]

It remains to lower bound the total variation distance. By definition,
\[
\ell(\bm z)
=
\frac12
\sum_y
\left|
P(y\mid \bm z)-P_\star(y\mid \bm z)
\right|.
\]
Keeping only the two coordinates $y_s$ and $y^\ast$, we have
\begin{align*}
\ell(\bm z)
\ge
&\frac12
\left|P(y_s\mid \bm z)-P_\star(y_s\mid \bm z)\right|\\
&+
\frac12\left|P(y^\ast\mid \bm z)-P_\star(y^\ast\mid \bm z)\right|
.
\end{align*}
Since the model prefers the shortcut answer,
\[
P(y_s\mid \bm z)-P(y^\ast\mid \bm z)=\gamma(\bm z)>0,
\]
whereas the target distribution prefers the correct answer,
\[
P_\star(y^\ast\mid \bm z)-P_\star(y_s\mid \bm z)
=
\gamma_\star(\bm z)>0.
\]
Adding these two inequalities gives
\begin{align*}
\gamma(\bm z)+\gamma_\star(\bm z)=&\left[P(y_s\mid \bm z)-P_\star(y_s\mid \bm z)\right]\\
&+
\left[P_\star(y^\ast\mid \bm z)-P(y^\ast\mid \bm z)\right].
\end{align*}
Therefore, the two-coordinate contribution to the total variation distance is at
least
\[
\frac{\gamma(\bm z)+\gamma_\star(\bm z)}{2}.
\]
Hence,
\[
\ell(\bm z)
\ge
\frac{\gamma(\bm z)+\gamma_\star(\bm z)}{2}.
\]
This completes the proof.
\end{proof}

\end{document}